\def\eqref#1{equation~\ref{#1}}
\def\ceil#1{\lceil #1 \rceil}
\def\floor#1{\lfloor #1 \rfloor}
\def\1{\bm{1}}
\def\0{\bm{0}}
\def\eps{{\epsilon}}
\def\vs{{\bm{s}}}
\def\vv{{\bm{v}}}
\def\vx{{\bm{x}}}
\def\vz{{\bm{z}}}
\DeclareMathAlphabet{\mathsfit}{\encodingdefault}{\sfdefault}{m}{sl}
\SetMathAlphabet{\mathsfit}{bold}{\encodingdefault}{\sfdefault}{bx}{n}
\def\gB{{\mathcal{B}}}
\def\gD{{\mathcal{D}}}
\def\gS{{\mathcal{S}}}
\def\gU{{\mathcal{U}}}
\def\gV{{\mathcal{V}}}
\def\gX{{\mathcal{X}}}
\def\sN{{\mathbb{N}}}
\newcommand{\E}{\mathbb{E}}
\newcommand{\R}{\mathbb{R}}
\newcommand{\bx}{\mathbf{x}}
\newcommand{\bv}{\mathbf{v}}
\newtheorem{theorem}{Theorem}
\newtheorem{corollary}{Corollary}
\newtheorem{proposition}{Proposition}
\icmltitlerunning{Improved, Deterministic Smoothing for $\ell_1$ Certified Robustness}
\begin{document}

\twocolumn[
\icmltitle{Improved, Deterministic Smoothing for $\ell_1$ Certified Robustness}

\icmlsetsymbol{equal}{*}

\begin{icmlauthorlist}
\icmlauthor{Alexander Levine}{md}
\icmlauthor{Soheil Feizi}{md}

\end{icmlauthorlist}

\icmlaffiliation{md}{Department of Computer Science, University of Maryland, College Park, Maryland, USA}

\icmlcorrespondingauthor{Alexander Levine}{alevine0@cs.umd.edu}

\icmlkeywords{Machine Learning, ICML}

\vskip 0.3in
]

\printAffiliationsAndNotice{}  

\begin{abstract}

Randomized smoothing is a general technique for computing sample-dependent robustness guarantees against adversarial attacks for deep classifiers. Prior works on randomized smoothing against $\ell_1$ adversarial attacks use additive smoothing noise and provide probabilistic robustness guarantees. In this work, we propose a non-additive and deterministic smoothing method, \textbf{D}eterministic \textbf{S}moothing with \textbf{S}plitting \textbf{N}oise (\textbf{DSSN}). To develop DSSN, we first develop SSN, a randomized method which involves generating each noisy smoothing sample by first randomly splitting the input space and then returning a representation of the center of the subdivision occupied by the input sample. In contrast to uniform additive smoothing, the SSN certification does not require the random noise components used to be independent. Thus, smoothing can be done effectively in just one dimension and can therefore be efficiently derandomized for quantized data (e.g., images). To the best of our knowledge, this is the first work to provide deterministic ``randomized smoothing" for a norm-based adversarial threat model while allowing for an arbitrary classifier (i.e., a deep model) to be used as a base classifier and without requiring an exponential number of smoothing samples. On CIFAR-10 and ImageNet datasets, we provide substantially larger $\ell_1$ robustness certificates compared to prior works, establishing a new state-of-the-art. The determinism of our method also leads to significantly faster certificate computation. Code is available at: \url{https://github.com/alevine0/smoothingSplittingNoise}.

\end{abstract}
\section{Introduction and Related Works}
Adversarial robustness in machine learning is a broad and widely-studied field which characterizes the \textit{worst-case} behavior of machine learning systems under small input perturbations \citep{szegedy2013intriguing,goodfellow2014explaining,carlini2017towards}. One area of active research is the design of {\it certifiably-robust} classifiers where, for each input $\vx$, one can compute a magnitude $\rho$, such that \textit{all} perturbed inputs $\vx'$ within a radius $\rho$ of $\vx$ are guaranteed to be classified in the same way as $\vx$. Typically, $\rho$ represents a distance in an $\ell_p$ norm: $\|\vx-\vx'\|_p \leq \rho$, for some $p$ which depends on the technique used.\footnote{Certifiably-robust classifiers for non-$\ell_p$ threat models have also been proposed including sparse ($\ell_0$) adversarial attacks \citep{levine2020robustness, lee2019tight}, Wasserstein attacks \citep{levine2020wasserstein}, geometric transformations \citep{fischer2020randomized} and patch attacks \citep{Chiang2020Certified, DBLP:conf/nips/0001F20a,Xiang2020PatchGuardPD,Zhang2020ClippedBD}. However, developing improved certified defenses under $\ell_p$ adversarial threat models remains an important problem and will be our focus in this paper.}

A variety of techniques have been proposed for certifiably $\ell_p$-robust classification \citep{wong2018provable,gowal2018effectiveness,Raghunathan2018,tjeng2018evaluating,NEURIPS2018_d04863f1,pmlr-v119-singla20a}. Among these are techniques that rely on Lipschitz analysis: if a classifier's logit functions can be shown to be Lipschitz-continuous, this immediately implies a robustness certificate \citep{Li2019PreventingGA,pmlr-v97-anil19a}. In particular, consider a classifier with logits $\{p_A, p_B, p_C, ...\}$, all of which are $c$-Lipschitz. Suppose for an input $\vx$, we have $p_A(\vx) > p_B(\vx) \geq p_C(\vx) \geq ...$. Also suppose the gap between the largest and the second largest logits is $d$ (i.e. $p_A(\vx) - p_B(\vx) = d$). The Lipschitzness implies that for all $\vx'$ such that $\|\vx-\vx'\| < d/(2c)$, $p_A(\vx')$ will still be the largest logit: in this ball, 
\begin{equation}
    p_A(\vx') > p_A(\vx) -\frac{d}{2}  \geq  p_{\text{others}}(\vx) +  \frac{d}{2}  > p_{\text{others}}(\vx'),
\end{equation}
where the first and third inequalities are due to Lipschitzness. Certification techniques based on \textit{randomized smoothing} \citep{pmlr-v97-cohen19c, salman2019provably, Zhai2020MACER, mohapatra2020higher, NEURIPS2020_77330e13, mohapatra2020higher, pmlr-v119-yang20c, lee2019tight, lecuyer2019certified, li2019certified, teng2020ell}, are, at the time of writing, the only robustness certification techniques that scale to tasks as complex as ImageNet classification. (See \citet{li2020sok} for a recent and comprehensive review and comparison of robustness certification methods.) In randomized smoothing methods, a ``base'' classifier is used to classify a large set of randomly-perturbed versions $(\vx  + \epsilon)$ of the input image $\vx$ where $\epsilon$ is drawn from a fixed distribution. The final classification is then taken as the plurality-vote of these classifications on noisy versions of the input. If samples $\vx$ and $\vx'$ are close, the distributions of $(\vx  + \epsilon)$ and $(\vx' + \epsilon)$ will substantially overlap, leading to provable robustness. \citet{salman2019provably} and \citet{levine2019certifiably} show that these certificates can in some cases be understood as certificates based on Lipschitz continuity where the \textit{expectation} of the output of the base classifier (or a function thereof) over the smoothing distribution is shown to be Lipschitz. 

 Randomized smoothing for the $\ell_1$ threat model was previously proposed by \citet{lecuyer2019certified, mohapatra2020higher, li2019certified, teng2020ell, lee2019tight} and \citet{pmlr-v119-yang20c}. \citet{pmlr-v119-yang20c} shows the best $\ell_1$ certification performance (using a certificate originally presented by \citet{lee2019tight} without experiments). These methods use {\it additive} smoothing noise and provide \textit{high-probability} certificates, with a failure rate that depends on the number of noisy samples. Outside of the setting of certified robustness, practical attacks in the $\ell_1$ threat model have additionally been studied \citep{chen2018ead}.

In this work, we propose a {\it non-additive} smoothing method for $\ell_1$-certifiable robustness on quantized data that is \textit{deterministic}. By ``quantized'' data, we mean data where each feature value occurs on a discrete level. For example, standard image files (including standard computer vision datasets, such as ImageNet and CIFAR-10) are quantized, with all pixel values belonging to the set $\{0, 1/255, 2/255, ..., 1\}$.  As \citet{carlini2017towards} notes, if a natural dataset is quantized, adversarial examples to this dataset must also be quantized (in order to be recognized/saved as valid data at all). Therefore, our assumption of quantized data is a rather loose constraint which applies to many domains considered in adversarial machine learning. We call our method {\bf D}eterministic {\bf S}moothing with {\bf S}plitting {\bf N}oise ({\bf DSSN}). DSSN produces \textit{exact} certificates, rather than high-probability ones. It also produces certificates in substantially less time than randomized smoothing because a large number of noise samples are no longer required. In addition to these benefits, the certified radii generated by DSSN are significantly larger than those of the prior state-of-the-art. 

To develop DSSN, we first propose a randomized method, Smoothing with Splitting Noise (\textbf{SSN}). Rather than simple additive noise, SSN uses ``splitting'' noise to generate a noisy input $\tilde{\vx}$: first, we generate a noise vector $\vs$ to split the input domain $[0,1]^d$ into subdivisions. Then, the noisy input $\tilde{\vx}$ is just the center of whichever sub-division $\vx$ belongs to. In contrast to prior smoothing works, this noise model is {\it non-additive}.

In contrast to additive uniform noise where the noise components ($\eps_i$'s in $\eps$) are independently distributed, in SSN, the splitting vector components ($s_i$'s in $\vs$) \textit{do not} need to be independently distributed. Thus, unlike the additive uniform smoothing where noise vectors must be drawn from a $d$-dimensional probability distribution, in SSN, the splitting vectors can be drawn from a {\it one-dimensional} distribution. In the quantized case, the splitting vector can be further reduced to a choice between a small number of elements, leading to a derandomized version of SSN (i.e. DSSN).

Below, we summarize our contributions:
\begin{itemize}
    \item We propose a novel randomized smoothing method, {\bf SSN}, for the $\ell_1$ adversarial threat model (Theorem \ref{thm:main_case_1}). 
    \item We show that \textbf{SSN} effectively requires smoothing in {\it one-dimension} (instead of $d$), thus it can be efficiently derandomized, yielding a deterministic certifiably robust classification method called \textbf{DSSN}.
    \item On ImageNet and CIFAR-10, we empirically show that DSSN significantly outperforms previous smoothing-based robustness certificates, effectively establishing a new state-of-the-art.
\end{itemize}
\subsection{Prior Works on Deterministic Smoothing}
\label{sec:prior_derandomized}
While this work is, to the best of our knowledge, the first to propose a deterministic version of a randomized smoothing algorithm to certify for a norm-based threat model without resricting the base classifier or requiring time exponential in the dimension $d$ of the input, prior deterministic ``randomized smoothing'' certificates have been proposed:
\begin{itemize}
    \item \textbf{Certificates for non-norm ($\ell_0$-like) threat models}. This includes certificates against patch adversarial attacks \citep{DBLP:conf/nips/0001F20a}; as well as poisoning attacks under a label-flipping \citep{rosenfeld2020certified} or whole-sample insertion/deletion \citep{levine2021deep} threat-model.
    These threat models are ``$\ell_0$-like'' because the attacker entirely corrupts some portion of the data, rather than just distorting it.
    \citet{DBLP:conf/nips/0001F20a} and \citet{levine2021deep} deal with this by ensuring that only a bounded fraction of base classifications can possibly be simultaneously exposed to any of the corrupted data. In the respective cases of patch adversarial attacks and poisoning attacks, it is shown that this can be done with a finite number of base classifications. \citet{rosenfeld2020certified}'s method, by contrast, is based on the randomized $\ell_0$ certificate proposed by \citet{lee2019tight}, and is discussed below.
    \item \textbf{Certificates for restricted classes of base classifiers}. This includes $k$-nearest neighbors \citep{weber2020rab} (for $\ell_2$ poisoning attacks) and linear models \citep{rosenfeld2020certified} (for label-flipping poisoning attacks). In these cases, existing randomized certificates are evaluated exactly for a restricted set of base classifier models. (\citet{pmlr-v97-cohen19c} and \citet{lee2019tight}'s methods, respectively.) It is notable that these are both poisoning certificates: in the poisoning setting, where the corrupted data is the training data, true randomized smoothing is less feasible, because it requires training very large ensemble of classifiers to achieve desired statistical properties. \citet{weber2020rab} also attempts this directly, however.
    \item \textbf{Certificates requiring time exponential in dimension $d$}. This includes, in particular, a concurrent work, \cite{kao2020deterministic}, which provides deterministic $\ell_2$ certificates. In order to be practical, this method requires that the first several layers of the network be Lipschitz-bounded by means other than smoothing. The ``smoothing'' is then applied only in a low-dimensional space. \citet{kao2020deterministic} note that this method is unlikely to scale to ImageNet.
\end{itemize}
\section{Notation}
Let $\vx$, $\vx'$ represent two points in $[0,1]^d$. We assume that our input space is bounded: this assumption holds for many applications (e.g., pixel values for image classification). If the range of values is not $[0,1]$, all dimensions can simply be scaled. A ``base'' classifier function will be denoted as $f: \R^d \rightarrow [0,1]$. In the case of a multi-class problem, this may represent a single logit.

Let $\delta := \vx' - \vx$, with components $\delta_1$, ..., $\delta_d$.  A function $p:  [0,1]^d \rightarrow [0,1]$ is said to be $c$-Lipschitz with respect to the $\ell_1$ norm iff:
\begin{equation}
   |p(\vx)- p(\vx')| \leq c \|\delta\|_1,\,\,\,\,  \forall \vx,\vx'.
\end{equation}
Let $\gU(a,b)$ represent the uniform distribution on the range $[a,b]$, and $\gU^d(a,b)$ represent a random $d$-vector, where each component is \textit{independently} uniform on  $[a,b]$.

Let $\1_{\text{(condition)}}$ represent the indicator function, and $\mathbb{1}$ be the vector $[1,1,...]^T$. In a slight abuse of notation, for $z\in \R, n \in \R^{+}$, let $z \bmod n := z - n\lfloor \frac{z}{n} \rfloor$ where $\lfloor\cdot\rfloor$ is the floor function; we will also use $\lceil\cdot\rceil$ as the ceiling function. For example, $9.5 \bmod 2 = 1.5$.

We will also discuss quantized data. We will use $q$ for the number of quantizations. Let 
\begin{equation}
    [a,b]_{(q)} := \left\{i/q \,\,\big| \,\, \lceil aq \rceil \leq i \leq  \lfloor bq \rfloor \right\}.
\end{equation}
 In particular, $[0,1]_{(q)}$ denotes the set $\{0, 1/q, 2/q, ..., (1-q)/q, 1\}$. Let $\bx,\bx'$ represent two points in $[0,1]_{(q)}^d$. A domain-quantized function $p:  [0,1]_{(q)}^d \rightarrow [0,1]$ is said to be $c$-Lipschitz with respect to the $\ell_1$ norm iff:
\begin{equation}
   |p(\bx)- p(\bx')| \leq c \|\delta\|_1,\,\,\,\,  \forall \bx,\bx' \in [0,1]_{(q)}^d,
\end{equation}
where $\delta := \bx' - \bx$. The uniform distribution on the set $[a,b]_{(q)}$ is denoted $\gU_{(q)}(a,b)$. 
\section{Prior Work on Uniform Smoothing for $\ell_1$ Robustness} \label{sec:yang}

\citet{lee2019tight} proposed an $\ell_1$ robustness certificate using uniform random noise:

\begin{theorem}[\citet{lee2019tight}] \label{thm:uniform}For any $f: \R^d   \rightarrow [0,1]$ and parameter $\lambda \in \R^{+}$, define:
\begin{equation}
    p(\vx) := \mathop{\E}_{\epsilon \sim \gU^d(-\lambda,\lambda)} \left[f(\vx + \epsilon)\right].
\end{equation}
Then, $p(.)$ is $1/(2\lambda)$-Lipschitz with respect to the $\ell_1$ norm.
\end{theorem}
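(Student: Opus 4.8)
The plan is to reduce the $d$-dimensional claim to a one-dimensional sliding-window estimate and then recombine the coordinates by telescoping. The structural fact that makes this work is that $\epsilon \sim \gU^d(-\lambda,\lambda)$ has \emph{independent} components, so Fubini's theorem lets me integrate out one coordinate at a time; this independence is exactly the ingredient the paper later dispenses with in SSN, so it is worth flagging precisely where it is used.

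First I would isolate a single coordinate $i$ and freeze the rest. For a fixed realization of the remaining noise $\epsilon_{-i}$, define the one-dimensional slice
\begin{equation}
g_{\epsilon_{-i}}(t) := \mathop{\E}_{\epsilon_i \sim \gU(-\lambda,\lambda)}\left[f(\ldots, t+\epsilon_i, \ldots)\right] = \frac{1}{2\lambda}\int_{t-\lambda}^{t+\lambda} \phi(u)\,du,
\end{equation}
where $\phi(u) := f(\ldots, u, \ldots) \in [0,1]$ denotes $f$ with its $j \neq i$ arguments frozen at $x_j + \epsilon_j$. The key one-dimensional lemma is that, writing $t' = t+h$ and using signed integrals so that the middle piece cancels,
\begin{equation}
g_{\epsilon_{-i}}(t') - g_{\epsilon_{-i}}(t) = \frac{1}{2\lambda}\left[\int_{t+\lambda}^{t'+\lambda}\phi(u)\,du - \int_{t-\lambda}^{t'-\lambda}\phi(u)\,du\right].
\end{equation}
Since $\phi$ is nonnegative and bounded by $1$, each of the two integrals runs over an interval of length $|h|$ and therefore lies in $[0,|h|]$; their difference thus lies in $[-|h|,|h|]$, giving $|g_{\epsilon_{-i}}(t') - g_{\epsilon_{-i}}(t)| \leq |h|/(2\lambda)$. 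I would emphasize that nonnegativity of $\phi$ is what confines the bracket to $[-|h|,|h|]$ rather than to $[-2|h|,2|h|]$, and that this estimate holds for \emph{every} displacement $h$ — not just small ones — because it never assumes the two windows overlap.

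Taking expectation over $\epsilon_{-i}$ and applying the triangle inequality transfers the same bound to $p$ along coordinate $i$: moving $x_i$ by $\delta_i$ while holding the other coordinates fixed changes $p$ by at most $|\delta_i|/(2\lambda)$. Finally I would telescope: define hybrid points $\vx^{(0)} = \vx, \vx^{(1)}, \ldots, \vx^{(d)} = \vx'$, where $\vx^{(i)}$ agrees with $\vx'$ in the first $i$ coordinates and with $\vx$ thereafter, so that consecutive points differ in a single coordinate, and sum:
\begin{equation}
|p(\vx) - p(\vx')| \leq \sum_{i=1}^{d} \left|p(\vx^{(i-1)}) - p(\vx^{(i)})\right| \leq \sum_{i=1}^{d} \frac{|\delta_i|}{2\lambda} = \frac{\|\delta\|_1}{2\lambda}.
\end{equation}
The main obstacle is the one-dimensional estimate, and specifically making it uniform in the displacement: the naive bound on a difference of two window integrals is $2|h|/(2\lambda)$, and the factor-of-two improvement to $|h|/(2\lambda)$ hinges on both integrals being nonnegative. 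Everything else (the coordinatewise reduction via independence and the telescoping recombination) is then routine. I would note in passing that a formal gradient computation gives the same constant — $\partial_{x_i} p$ equals the difference of two face integrals of $f/(2\lambda)^d$, each in $[0,1/(2\lambda)]$, so $\|\nabla p\|_\infty \leq 1/(2\lambda)$ and hence $p$ is $(1/(2\lambda))$-Lipschitz in $\ell_1$ by duality — but the integral argument above avoids any smoothness or differentiability assumption on $f$.
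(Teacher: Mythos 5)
Your proof is correct, but it takes a genuinely different route from the paper's. The paper argues globally: it writes $p(\vx)$ and $p(\vx')$ as expectations of $f$ over uniform distributions on the $\ell_\infty$ balls $\gB(\vx)$, $\gB(\vx')$ of radius $\lambda$, bounds $|p(\vx)-p(\vx')|$ by the probability mass lying outside the intersection $\gB(\vx)\cap\gB(\vx')$, computes that intersection's volume as the product $\prod_{i=1}^d(2\lambda-|\delta_i|)$, and finishes with the inequality $\prod_{i=1}^d\left(1-\frac{|\delta_i|}{2\lambda}\right) \geq 1-\sum_{i=1}^d\frac{|\delta_i|}{2\lambda}$, proved by induction; because this geometric formula only makes sense when every $|\delta_i|<2\lambda$, the paper also needs a separate trivial case for $\|\delta\|_1\geq 2\lambda$. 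You instead localize to one coordinate: the sliding-window identity plus nonnegativity of $\phi$ gives a one-dimensional estimate that is uniform in the displacement, and Fubini plus telescoping over hybrid points recombines the coordinates. Your route buys three things: no case split (your 1-D bound never assumes the two windows overlap), no product inequality, and an explicit identification of where independence of the $\epsilon_i$ enters (the Fubini step) --- which dovetails nicely with the paper's later emphasis that SSN removes exactly this requirement. The paper's route buys something else: its total-variation-style structure (bound the probability that the two smoothing distributions ``disagree,'' then use $f\in[0,1]$ on the disagreement event) exactly parallels the proof of Theorem \ref{thm:main_case_1} for SSN, making the comparison between additive and splitting noise transparent. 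Both arguments are tight and require only measurability of $f$; your closing gradient computation is a reasonable sanity check but, as you note yourself, not needed.
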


\citet{pmlr-v119-yang20c} later provided a theoretical justification for the uniform distribution being optimal among {\it additive} noise distributions for certifying $\ell_1$ robustness\footnote{More precisely, \citet{pmlr-v119-yang20c} suggested that distributions with $d$-cubic level sets are optimal for $\ell_1$ robustness.}. \citet{pmlr-v119-yang20c} also provided experimental results on CIFAR-10 and ImageNet which before our work were the state-of-the-art $\ell_1$ robustness certificates. 

Following \citet{pmlr-v97-cohen19c}, \citet{pmlr-v119-yang20c} applied the smoothing method to a ``hard'' (in \citet{salman2019provably}'s terminology)  base classifier. That is, if the base classifier returns the class $c$ on input  $\vx+\epsilon$, then  $f_c(\vx+\epsilon) = 1$, otherwise $f_c(\vx+\epsilon) = 0$. 
Also following \citet{pmlr-v97-cohen19c}, in order to apply the certificate in practice, \citet{pmlr-v119-yang20c} first takes $N_0 =64$ samples to estimate the plurality class $A$, and then uses  $N = 100,000$ samples to lower-bound $p_A(\vx)$ (the fraction of noisy samples  classified as $A$) with high probability. The other smoothed logit values ($p_B(\vx),$ etc.) can then all be assumed to be $\leq 1-p_A(\vx)$. This  approach has the benefit that each logit does not require an independent statistical bound, and thus reduces the estimation error, but has the drawback that certificates are impossible if $p_A(\vx) \leq 0.5$, creating a gap between the clean accuracy of the smoothed classifier and the certified accuracy near $\rho = 0$.

We note that the stated Theorem \ref{thm:uniform} is slightly more general than the originally stated version by \citet{lee2019tight}: the original version assumed that only $p_A(\vx)$ is available, as in the above estimation scheme, and therefore just gave the $\ell_1$ radius in which $p_A(\vx')$ is guaranteed to remain $\geq 0.5$. For completeness, we provide a proof of the more general form (Theorem \ref{thm:uniform}) in the appendix. 

In this work, we show that by using \textit{deterministic smoothing} with \textit{non-additive noise}, improved certificates can be achieved, because we (i) avoid the statistical issues presented above (by estimating all smoothed logits \textit{exactly}), and (ii) improve the performance of the base classifier itself.
\section{Our Proposed Method}
In this paper, we describe a new method, Smoothing with Splitting Noise (\textbf{SSN}), for certifiable robustness against $\ell_1$ adversarial attacks. In this method, for each component $x_i$ of $\vx$, we randomly split the interval $[0,1]$ into sub-intervals. The noised value $\tilde{x}_i$ is the middle of the sub-interval that contains $x_i$. We will show that this method corresponds closely to the uniform noise method, and so we continue to use the parameter $\lambda$. The precise correspondence will become clear in Section \ref{sec:marg_distrib}: however, for now, $\lambda$ can be interpreted as controlling (the inverse of) the frequency with which the interval $[0,1]$ is split into sub-intervals. We will show that this method, unlike the additive uniform noise method, can be efficiently derandomized. For simplicity, we will first consider the case corresponding to $\lambda \geq 0.5$, in which at most two sub-intervals are created, and present the general case later.
\begin{figure}[t]
    \centering
    \includegraphics[width=0.40\textwidth]{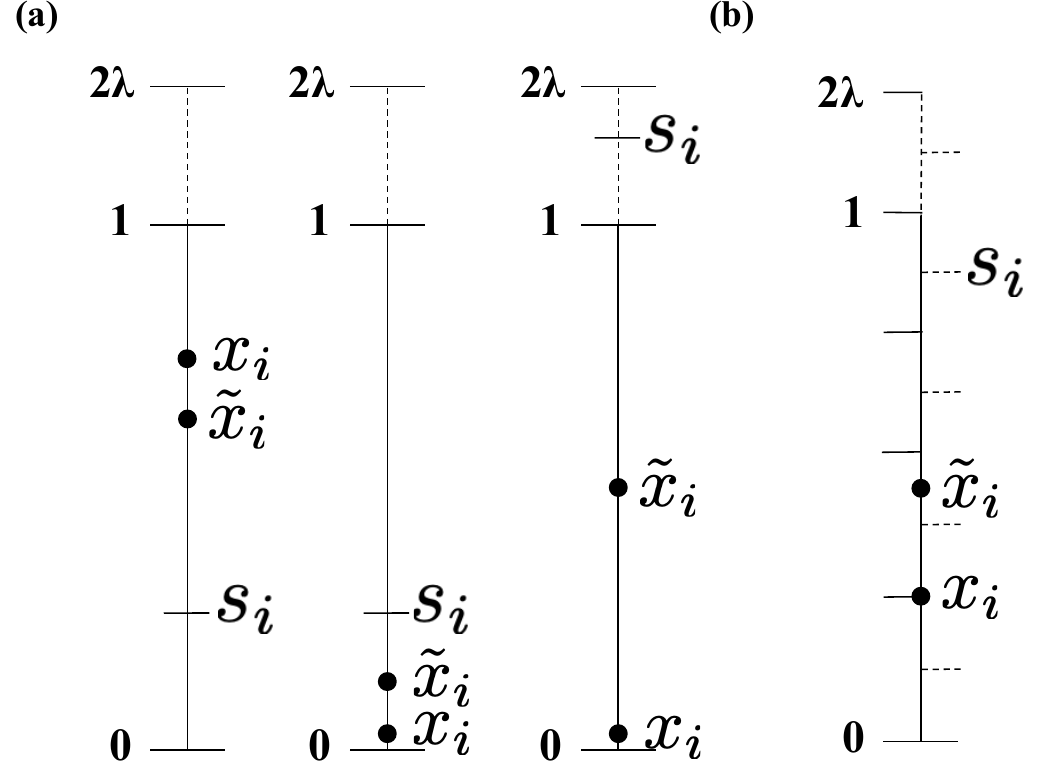}
    \caption{(a) Definition of $\tilde{\vx}$ in the $\lambda \geq 0.5$ case. If $s_i \in [0,1)$, then it ``splits'' the interval $[0,1]$: $\tilde{x}_i$ is the center of whichever sub-interval $x_i$ occurs in. If $s_i > 1$, $\tilde{x}_i = 0.5$, and no information about the original pixel is kept. (b) An example of $\tilde{\bx}$ in the \textit{quantized} $\lambda \geq 0.5$ case. Here, $q=4$ and $2\lambda = 5/4 $. We see that $\bx_i = 1/4$ lies directly on a quantization level, while $s_i = 7/8$ lies on a half-step between quantization levels. We choose $s_i$ to lie on ``half-steps'' for the sake of symmetry: the range of $\tilde{\bx}_i$ is symmetrical around $1/2$.}
    \label{fig:split_randomization_1}
\end{figure}
\begin{theorem}[$\lambda \geq 0.5$ Case]  \label{thm:main_case_1}
For any $f: \R^d   \rightarrow [0,1]$, and $\lambda \geq 0.5$ let $\vs \in [0,2\lambda]^d$ be a random variable with a fixed distribution such that:
\begin{equation}
      s_i \sim \gU(0,2\lambda), \,\,\,\, \forall i.
\end{equation}
Note that the components $s_1, ..., s_d$ are \textbf{not} required to be  distributed independently from each other. Then, define:
\begin{align}
\tilde{x}_i &:=
\frac{\min{(s_i,1)} + \1_{x_i > s_i}}{2}\
,\,\,\,\, \forall i \label{eq:x_tilde_def}\\
p(\vx) &:=\mathop{\E}_{\vs}\left[ f(\tilde{\vx})\right].
\end{align}
Then $p(\cdot)$ is $1/(2\lambda)$-Lipschitz with respect to the $\ell_1$ norm.
\end{theorem}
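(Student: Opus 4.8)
The plan is to bound $|p(\vx) - p(\vx')|$ by a coordinate-wise hybrid (telescoping) argument built on a shared-randomness coupling, so that only the \emph{marginal} distribution of each $s_i$ ever enters the estimate — which is precisely what lets us dispense with the independence assumption.

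First, since $p(\vx) = \E_{\vs}[f(\tilde\vx)]$ and $p(\vx') = \E_{\vs}[f(\tilde\vx')]$ are expectations against the same fixed distribution of $\vs$, I would couple the two smoothed inputs by evaluating both with the \emph{same} draw of $\vs$, giving $p(\vx)-p(\vx') = \E_{\vs}[f(\tilde\vx) - f(\tilde\vx')]$. The key structural observation is that, for a fixed $\vs$, the map $\vx \mapsto \tilde\vx$ acts coordinate-wise and the offset $\min(s_i,1)/2$ does \emph{not} depend on $\vx$; the only $\vx$-dependence in $\tilde x_i$ is through the indicator $\1_{x_i > s_i}$. Hence $\tilde x_i$ and $\tilde x_i'$ can disagree only when $\1_{x_i>s_i} \neq \1_{x_i'>s_i}$, which (taking $x_i \le x_i'$ without loss of generality) happens exactly when $s_i \in [x_i, x_i')$.

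Next I would introduce hybrids $\vx^{(0)} = \vx, \vx^{(1)}, \dots, \vx^{(d)} = \vx'$, where $\vx^{(i)}$ agrees with $\vx'$ in the first $i$ coordinates and with $\vx$ in the rest, and telescope:
\begin{equation}
p(\vx) - p(\vx') = \sum_{i=1}^{d} \left( p(\vx^{(i-1)}) - p(\vx^{(i)}) \right).
\end{equation}
Consecutive hybrids differ only in coordinate $i$, so under the coupling $\tilde\vx^{(i-1)}$ and $\tilde\vx^{(i)}$ differ (if at all) only in that one coordinate, and only on the event $\{s_i \in [x_i, x_i')\}$. Using $f \in [0,1]$ to bound the integrand by $1$ on that event and $0$ off it,
\begin{equation}
\left| p(\vx^{(i-1)}) - p(\vx^{(i)}) \right| \le \E_{\vs}\!\left[ \1_{s_i \in [x_i, x_i')} \right] = \Pr[s_i \in [x_i, x_i')] = \frac{|\delta_i|}{2\lambda},
\end{equation}
where the last equality uses only that $s_i \sim \gU(0,2\lambda)$ marginally together with $[x_i,x_i') \subseteq [0,1] \subseteq [0,2\lambda]$ (valid since $\lambda \ge 0.5$). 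Summing over $i$ gives $|p(\vx)-p(\vx')| \le \sum_i |\delta_i|/(2\lambda) = \|\delta\|_1/(2\lambda)$, as claimed.

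The main thing to get right is the isolation step: I must argue cleanly that changing a single coordinate alters $f(\tilde\vx)$ only on the one-dimensional event $\{s_i \in [x_i,x_i')\}$, so that its contribution is controlled by the marginal of $s_i$ alone. This is where the splitting construction pays off — because each hybrid step perturbs exactly one coordinate, the joint dependence among the $s_j$ never appears, so no independence hypothesis is needed. I would also note the degenerate case $s_i > 1$, where $\tilde x_i = 1/2$ regardless of $x_i$; this is automatically consistent, since $s_i > 1$ forces $s_i \notin [x_i,x_i') \subseteq [0,1]$, and such draws correctly contribute nothing to the single-coordinate bound.
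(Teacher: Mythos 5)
Your proof is correct, and it reaches the bound by a different decomposition than the paper's. Both arguments share the same core: couple the two inputs by using a single draw of $\vs$, observe that for fixed $\vs$ the coordinate map $x_i \mapsto \tilde{x}_i$ depends on $x_i$ only through $\1_{x_i > s_i}$, so that $\tilde{x}_i \neq \tilde{x}_i'$ exactly when $s_i$ lands between $x_i$ and $x_i'$ (an event of probability $|\delta_i|/(2\lambda)$, computed from the marginal of $s_i$ alone), and finish using $f(\cdot) \in [0,1]$. The difference is where the sum over coordinates comes from. The paper stays with the pair $(\vx, \vx')$, applies a union bound to get $\Pr_\vs[\tilde{\vx} \neq \tilde{\vx}'] \leq \|\delta\|_1/(2\lambda)$, and then splits $\E_\vs[f(\tilde{\vx}) - f(\tilde{\vx}')]$ by conditioning on the event $\{\tilde{\vx} \neq \tilde{\vx}'\}$, whose complementary term vanishes. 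You instead telescope through the hybrids $\vx^{(0)}, \dots, \vx^{(d)}$ and apply the triangle inequality, so that each summand is a one-coordinate perturbation controlled by the marginal of that one $s_i$. The two routes are equally rigorous, give the same constant, and rely on the same implicit fact that $[\min(x_i,x_i'),\max(x_i,x_i')) \subseteq [0,2\lambda]$ when $\lambda \geq 0.5$. Your hybrid route makes the dispensability of independence syntactically manifest (no step ever mentions the joint law of $\vs$) and cleanly reduces the $d$-dimensional claim to $d$ one-dimensional ones; the paper's union-bound route avoids introducing auxiliary points, and its template is reused verbatim in the appendix for the general $\lambda < 0.5$ case, where only the per-coordinate disagreement probability has to be recomputed.
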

To understand the distribution of $\tilde{x}_i$, we can view $s_i$ as ``splitting'' the interval $[0,1]$ into two sub-intervals, $[0,s_i]$ and $(s_i, 1]$. $\tilde{x}_i$ is then the middle of whichever sub-interval contains $x_i$. If $s_i \geq 1$, then the interval $[0,1]$ is not split, and $\tilde{x}_i$ assumes the value of the middle of the entire interval ( $= 1/2$): see Figure \ref{fig:split_randomization_1}-a.

\begin{proof}
Consider two arbitrary points $\vx, \vx'$ where $\delta:=\vx'-\vx$.  Note that $ \max(x_i, x'_i) - \min(x_i, x'_i) = |x_i'-x_i|=  |\delta_i|$. For a fixed vector $\vs$, additionally note that
$\tilde{x}_i = \tilde{x}'_i$ unless $s_i$ falls between $x_i$ and $x'_i$ (i.e., unless  $\min(x_i, x'_i) \leq s_i  <  \max(x_i, x'_i)$).  Therefore:
\begin{equation}
    \Pr_\vs [\tilde{x}_i \neq \tilde{x}'_i] = \frac{|\delta_i|}{2\lambda}.
\end{equation}
By union bound:
\begin{equation} \label{eq:union_bound}
\begin{split}
      \Pr_\vs [\tilde{\vx} \neq \tilde{\vx'}]& =   \Pr_\vs \left[\bigcup_{i=1}^d \tilde{x}_i \neq \tilde{x}'_i\right] \\
      &\leq \sum_{i=1}^d  \frac{|\delta_i|}{2\lambda} = \frac{\|\delta\|_1}{2\lambda}  
\end{split}
\end{equation}
Then:
\begin{equation}
    \begin{split}
        &|p(\vx)- p(\vx')|\\ &= \left|\mathop{\E}_{\vs}\left[ f(\tilde{\vx})\right] - \mathop{\E}_{\vs}\left[ f(\tilde{\vx}')\right]\right| \\&=
         \left|\mathop{\E}_{\vs}\left[ f(\tilde{\vx}) - f(\tilde{\vx}')\right]\right| \\&=
          \Bigg|\Pr_\vs [\tilde{\vx} \neq \tilde{\vx}']\mathop{\E}_{\vs}
          \left[ f(\tilde{\vx}) - f(\tilde{\vx}') | \tilde{\vx} \neq \tilde{\vx}'\right]
          \\&+ 
          \Pr_\vs [\tilde{\vx} = \tilde{\vx}']
          \mathop{\E}_{\vs}\left[ f(\tilde{\vx}) - f(\tilde{\vx}') | \tilde{\vx} = \tilde{\vx}'\right]
          \Bigg| \\
              \end{split}
\end{equation}
Because $\mathop{\E}_{\vs}\left[ f(\tilde{\vx}) - f(\tilde{\vx}') | \tilde{\vx} = \tilde{\vx}'\right]$ is zero, we have:
\begin{equation}
    \begin{split}
          & |p(\vx)- p(\vx')|\\
         & =\Pr_\vs [\tilde{\vx} \neq \tilde{\vx}'] \left|\mathop{\E}_{\vs}
          \left[ f(\tilde{\vx}) - f(\tilde{\vx}') | \tilde{\vx} \neq \tilde{\vx}'\right]
         \right| \\
         &\leq \frac{\|\delta\|_1}{2\lambda}\cdot 1
    \end{split}
\end{equation}
where in the final step, we used Equation \ref{eq:union_bound}, as well as the assumption that $f(\cdot) \in [0,1]$. Thus, by the definition of Lipschitz-continuity,  $p$ is  $1/(2\lambda)$-Lipschitz with respect to the $\ell_1$ norm.
\end{proof}
It is important that we do \textbf{not} require that $s_i$'s be independent. (Note the union bound in Equation \ref{eq:union_bound}: the inequality holds regardless of the joint distribution of the components of $\vs$, as long as each $s_i$ is uniform.) This allows us to develop a deterministic smoothing method below.
\subsection{Deterministic SSN (DSSN)} \label{sec:DSSN}
If SSN is applied to quantized data (e.g. images), we can use the fact that the noise vector $\vs$ in Theorem \ref{thm:main_case_1} is \textit{not} required to have independently-distributed components to derive an efficient derandomization of the algorithm. In order to accomplish this, we first develop a quantized version of the SSN method, using input $\bx \in [0,1]^d_q$ (i.e. $\bx$ is a vector whose components belong to $\{0,1/q,...,1\}$). To do this, we simply choose each of our splitting values $s_i$ to be on one of the half-steps between possible quantized input values:  $\vs \in \left[0,2\lambda-1/q\right]_{(q)}^d + \mathbb{1}/(2q)$. We also require that $2\lambda$ is a multiple of $1/q$ (in experiments, when comparing to randomized methods with continuous $\lambda$, we use $\lambda' = \floor{2\lambda q}/{2q}$.) See Figure \ref{fig:split_randomization_1}-b.
\begin{corollary}[$\lambda \geq 0.5$ Case]  \label{thm:main_case_1_quantized}
For any $f: \R^d   \rightarrow [0,1]$, and $\lambda \geq 0.5$ (with $2\lambda$ a multiple of $1/q$), let $\vs \in \left[0,2\lambda-1/q\right]_{(q)}^d + \mathbb{1}/(2q)$ be a random variable with a fixed distribution such that:
\begin{equation}
      s_i \sim \gU_{(q)}\left(0,2\lambda-1/q\right) + 1/(2q), \,\,\,\, \forall i.
\end{equation}
Note that the components $s_1, ..., s_d$ are \textbf{not} required to be  distributed independently from each other. Then, define:
\begin{align}
\tilde{\bx}_i &:= \frac{\min(s_i,1) + \1_{\bx_i > s_i}}{2},\,\,\,\, \forall i \label{eq:x_tilde_def_quantized}\\
p(\bx) &:=\mathop{\E}_{\vs}\left[ f(\tilde{\bx})\right].
\end{align}
Then, $p(.)$ is $1/(2\lambda)$-Lipschitz with respect to the $\ell_1$ norm on the quantized domain $\bx \in [0,1]^d_{(q)}$.
\end{corollary}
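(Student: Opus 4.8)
The plan is to reuse the argument of Theorem~\ref{thm:main_case_1} essentially verbatim, since the entire structure of that proof---the decomposition of $|p(\bx)-p(\bx')|$ by conditioning on whether $\tilde{\bx}=\tilde{\bx}'$, the vanishing of the conditional expectation on the event $\tilde{\bx}=\tilde{\bx}'$, and the final bound by $\Pr_\vs[\tilde{\bx}\neq\tilde{\bx}']\cdot 1$ using $f(\cdot)\in[0,1]$---depends only on the single quantity $\Pr_\vs[\tilde{\bx}_i\neq\tilde{\bx}'_i]$. So the only thing I actually need to recompute in the quantized setting is this per-coordinate collision probability, and then verify it again equals $|\delta_i|/(2\lambda)$; everything downstream is unchanged.

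To compute it, fix a coordinate $i$ and write $\bx_i=a/q$ and $\bx'_i=b/q$ for integers $0\le a\le b\le q$ (without loss of generality $a\le b$), so that $|\delta_i|=(b-a)/q$. As in the continuous case, $\tilde{\bx}_i=\tilde{\bx}'_i$ unless the splitting value $s_i$ lands between $\bx_i$ and $\bx'_i$. Since each $s_i$ lives on a half-step $(2j+1)/(2q)$ while $\bx_i,\bx'_i$ lie on integer steps, $s_i$ can never coincide with either endpoint; this is exactly why the half-step placement was chosen, and it removes the open/closed-interval subtlety present in the continuous proof. The half-steps lying between $a/q$ and $b/q$ are precisely $(a+\tfrac12)/q,\,(a+\tfrac32)/q,\,\dots,\,(b-\tfrac12)/q$, of which there are exactly $b-a=q|\delta_i|$. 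Because $s_i$ is uniform over its $2\lambda q$ support points, each has probability $1/(2\lambda q)$, and hence
\begin{equation}
  \Pr_\vs[\tilde{\bx}_i\neq\tilde{\bx}'_i]=q|\delta_i|\cdot\frac{1}{2\lambda q}=\frac{|\delta_i|}{2\lambda},
\end{equation}
which matches Theorem~\ref{thm:main_case_1} exactly. Applying the union bound over $i$ then gives $\Pr_\vs[\tilde{\bx}\neq\tilde{\bx}']\le\|\delta\|_1/(2\lambda)$, and the identical conditioning argument completes the proof.

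The one place requiring genuine care---and the main (if modest) obstacle---is confirming that the counting above is valid, i.e.\ that every one of the $q|\delta_i|$ relevant half-steps actually lies in the support of $s_i$, and that the support contains exactly $2\lambda q$ equally-likely points so that the per-point probability is cleanly $1/(2\lambda q)$. The latter is why the hypothesis requires $2\lambda$ to be a multiple of $1/q$: the set $[0,2\lambda-1/q]_{(q)}+\mathbbm{1}/(2q)$ then consists of precisely the half-steps $1/(2q),3/(2q),\dots,2\lambda-1/(2q)$, numbering $2\lambda q$. For the former, note that since $\lambda\ge 0.5$ we have $2\lambda\ge 1$, so the support reaches up to $2\lambda-1/(2q)\ge 1-1/(2q)$ and thus already contains \emph{every} half-step in $(0,1)$; as $\bx_i,\bx'_i\in[0,1]$, all $q|\delta_i|$ half-steps between them are included and none is silently dropped from the count. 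With these two checks in place the per-coordinate probability holds with equality and the remainder of the argument is identical to the continuous case.
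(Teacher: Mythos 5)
Your proof is correct and follows essentially the same route as the paper's: both reduce everything to recomputing the per-coordinate collision probability $\Pr_\vs[\tilde{\bx}_i \neq \tilde{\bx}'_i] = |\delta_i|/(2\lambda)$ by counting the $q|\delta_i|$ admissible half-step values of $s_i$ out of its $2\lambda q$ equally likely support points, then invoke the union bound and conditioning argument of Theorem~\ref{thm:main_case_1} unchanged. Your explicit verification that $\lambda \geq 0.5$ guarantees every half-step in $(0,1)$ lies in the support of $s_i$ is a detail the paper leaves implicit, and it is a worthwhile addition.
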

\begin{proof}
Consider two arbitrary quantized points $\bx, \bx'$ where $\delta=\bx'-\bx$.  Again note that $ \max(\bx_i, \bx'_i) - \min(\bx_i, \bx'_i) = |\bx_i'-\bx_i|=  |\delta_i|$. For a fixed vector $\vs$, additionally note that
$\tilde{\bx}_i = \tilde{\bx}'_i$ unless $s_i$ falls between $\bx_i$ and $\bx'_i$ (i.e., unless  $\min(\bx_i, \bx'_i) \leq s_i  <  \max(\bx_i, \bx'_i)$). 
Note that $\delta_i$ must be a multiple of $1/q$, and that there are exactly $q\cdot |\delta_i|$ discrete values that $s_i$ can take such that the condition $\min(\bx_i, \bx'_i) \leq s_i  <  \max(\bx_i, \bx'_i)$ holds. This is out of $2\lambda q$ possible values over which $s_i$ is uniformly distributed. Thus, we have:
\begin{equation}
    \Pr_\vs [\tilde{\bx}_i \neq \tilde{\bx'}_i] = \frac{|\delta_i|}{2\lambda}
\end{equation}
The rest of the proof proceeds as in the continuous case (Theorem \ref{thm:main_case_1}).
\end{proof}
If we required that $s_i$'s be independent, an exact computation of $p(\bx)$ would have required evaluating $(2\lambda q)^d$ possible values of $\vs$. This is not practical for large $d$. However, because we do not have this independence requirement, we can avoid this exponential factor. To do this, we first choose a single scalar splitting value $s_{\text{base}}$: each $s_i$ is then simply a constant offset of  $s_{\text{base}}$. We proceed as follows:

First, before the classifier is ever used, we choose a single, fixed, arbitrary vector $\bv \in [0,2\lambda-1/q]_{(q)}^d$. In practice, $\bv$ is generated pseudorandomly when the classifier is trained, and the seed is stored with the classifier so that the same $\bv$ is used whenever the classifier is used. Then, at test time, we sample a scalar variable as:
\begin{equation}
        s_{\text{base}} \sim \gU_{(q)}(0,2\lambda-1/q) + 1/(2q).
\end{equation}
Then, we generate each $s_i$ by simply adding the base variable $s_{base}$ to $v_i$: 
\begin{equation}
       \forall i,   \,\,\,\,\,s_i :=  (s_{\text{base}} + v_i) \mod 2\lambda
\end{equation}
Note that the marginal distribution of each $s_i$ is $s_i \sim \gU_{(q)}\left(0,2\lambda-1/q\right) + 1/(2q)$, which is sufficient for our provable robustness guarantee. In this scheme, the only source of randomness at test time is the single random scalar $s_{\text{base}}$, which takes on one of $2\lambda q$ values. We can therefore evaluate the exact value of $p(\bx)$ by simply evaluating $f(\tilde{\bx})$ a total of $2\lambda q$ times, for each possible value of $s_{\text{base}}$. Essentially, by removing the independence requirement, the splitting method allows us to replace a $d$-dimensional noise distribution with a {\it one}-dimensional noise distribution. In quantized domains, this allows us to efficiently derandomize the SSN method without requiring exponential time. We call this resulting deterministic method \textbf{DSSN}.

One may wonder why we do not simply use $s_1 = s_2 = s_3...= s_d$. While this can work, it leads to some undesirable properties when $\lambda > 0.5$. In particular, note that with probability $(2\lambda -1)$, we would have all splitting values $s_i > 1$. This means that every element $\tilde{x}_i$ would be 0.5. In other words, with probability $(2\lambda -1)/ (2\lambda)$, $\tilde{\bx} = 0.5 \cdot \mathbb{1}$. This restricts the expressivity of the smoothed classifier:

\begin{equation}
    p(\bx) = \frac{2\lambda -1}{2\lambda} f(0.5 \cdot \mathbb{1}) +\frac{1}{2\lambda}\mathop{\E}_{\vs < 1}\left[ f(\tilde{\bx})\right]. \label{eq:all_s_equal}
\end{equation}
This is the sum of a constant, and a function bounded in $[0,1/(2\lambda)]$. Clearly, this is undesirable. By contrast, if we use an offset vector $\bv$ as described above, not every component will have $s_i > 1$ simultaneously. This means that $\tilde{\vx}$ will continue to be sufficiently expressive over the entire distribution of $s_{\text{base}}$.

\subsection{Relationship to Uniform Additive Smoothing}  \label{sec:uas_compare}
In this section, we explain the relationship between SSN and uniform additive smoothing \citep{pmlr-v119-yang20c} with two main objectives:
\begin{enumerate}
    \item We show that, for each element $x_i$, the \textit{marginal} distributions of the noisy element $\tilde{x}_i$ of SSN and the noisy element $(x_i + \epsilon_i)$ of uniform additive smoothing are directly related to one another. However we show that, for large $\lambda$, the distribution of uniform additive smoothing $(x_i + \epsilon_i)$ has an undesirable property which SSN avoids. This creates large empirical improvements in certified robustness using SSN, demonstrating an additional advantage to our method separate from derandomization.
    \item We show that additive uniform noise does \textit{not} produce correct certificates when using arbitrary joint distributions of $\epsilon$. This means that it cannot be easily derandomized in the way that SSN can.
\end{enumerate}
\subsubsection{Relationship between Marginal Distributions of  $\tilde{x}_i$ and $(x_i + \epsilon_i)$ } \label{sec:marg_distrib}
To see the relationship between uniform additive smoothing and SSN, we break the marginal distributions of each component of noised samples into cases (assuming $\lambda \geq 0.5$):
\begin{equation} \label{eq:uniform_additive_cases}
      x_i + \epsilon_i \sim  \begin{cases}
                 \gU(x_i-\lambda,1-\lambda) &\text{ w. prob. } \frac{1-x_i}{2\lambda} \\    \gU(1-\lambda,\lambda) &\text{ w. prob. } \frac{2\lambda - 1}{2\lambda} \\
                 \gU(\lambda,x_i+\lambda) &\text{ w. prob. } \frac{x_i}{2\lambda}
                        \\
                    \end{cases}
\end{equation}
\begin{equation}
       \tilde{x}_i \sim \begin{cases}
                     \,\,\,\frac{\gU(x_i,1)}{2} &\text{ w. prob. } \frac{1-x_i}{2\lambda}\\
                     \,\,\,\,\,\,\,\,\,\, \frac{1}{2} &\text{ w. prob. } \frac{2\lambda - 1}{2\lambda} \\ 
                     \frac{\gU(1,x_i+1) }{2} &\text{ w. prob. } \frac{x_i}{2\lambda}
                    \end{cases}
\end{equation}

We can see that there is a clear correspondence (which also justifies our re-use of the parameter $\lambda$.) In particular, we can convert the marginal distribution of uniform additive noise to the  marginal distribution of SSN by applying a simple mapping: $\tilde{x}_i \sim g(x_i + \epsilon_i )$ where:
\begin{equation} \label{eq:noise_mapping}
    g(z) := \begin{cases}
    \frac{z+\lambda}{2} \quad &\text{     if }  z < 1-\lambda\\
    \frac{1}{2} \quad &\text{     if }   1-\lambda<z< \lambda\\
    \frac{z-\lambda + 1}{2} \quad &\text{     if }  z>\lambda \\
    \end{cases}
\end{equation}
For $\lambda = 0.5$, this is a simple affine transformation: 
\begin{equation}
    \tilde{x}_i \sim 1/2 (x_i + \epsilon_i) + 1/4 \label{eq:equivalence_lambda_half}
\end{equation} 
In other words, in the case of $\lambda = 0.5$, $\tilde{x}_i$ is also uniformly distributed. However, for $\lambda > 0.5$, Equation \ref{eq:uniform_additive_cases} reveals an unusual and undesirable property of using uniform additive noise: \textit{regardless of the value of $x_i$}, there is always a fixed probability $\frac{2\lambda-1}{2\lambda}$ that the smoothed value $x_i + \epsilon_i$ is uniform on the interval $[1-\lambda, \lambda]$.  Furthermore, this constant probability represents the only case in which $(x_i + \epsilon_i)$ can assume values in this interval. These values therefore carry no information about $x_i$ and are all equivalent to each other. However, if $\lambda$ is large, this range dominates the total range of values of  $x_i + \epsilon_i$ which are observed (See Figure \ref{fig:compare_representations}-b.)
\begin{figure}[t]
    \centering
    \includegraphics[width=0.30\textwidth]{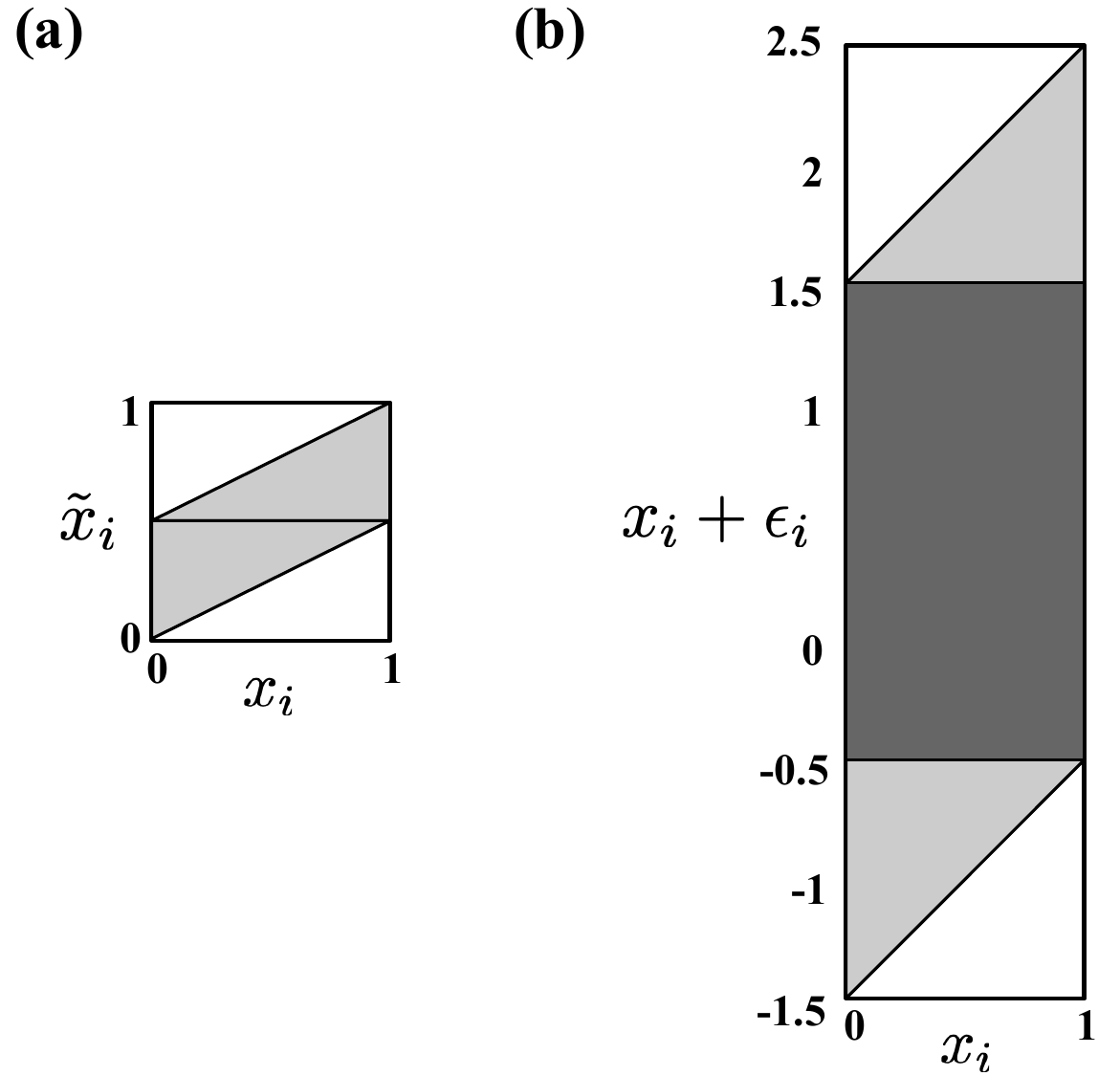}
    \caption{Range of noise values possible for each sample feature $x_i$, under (a) SSN, for any $\lambda \geq 0.5$ and (b) uniform additive smoothing, $\lambda = 1.5$. Possible pairs of clean and noise values are shown in grey (both light and dark). In uniform additive smoothing, note that all values of $x_i+\epsilon_i$ in the range [-0.5,1.5], shown in dark grey, can correspond to \textit{any} value of $x_i$. This means that these values of $x_i+\epsilon_i$ carry no information about $x_i$ whatsoever. By contrast, using SSN, only the value $\tilde{x}_i = 1/2$ has this property.}
    \label{fig:compare_representations}
\end{figure}
\begin{figure}[h]
    \centering
    \includegraphics[width=0.48\textwidth]{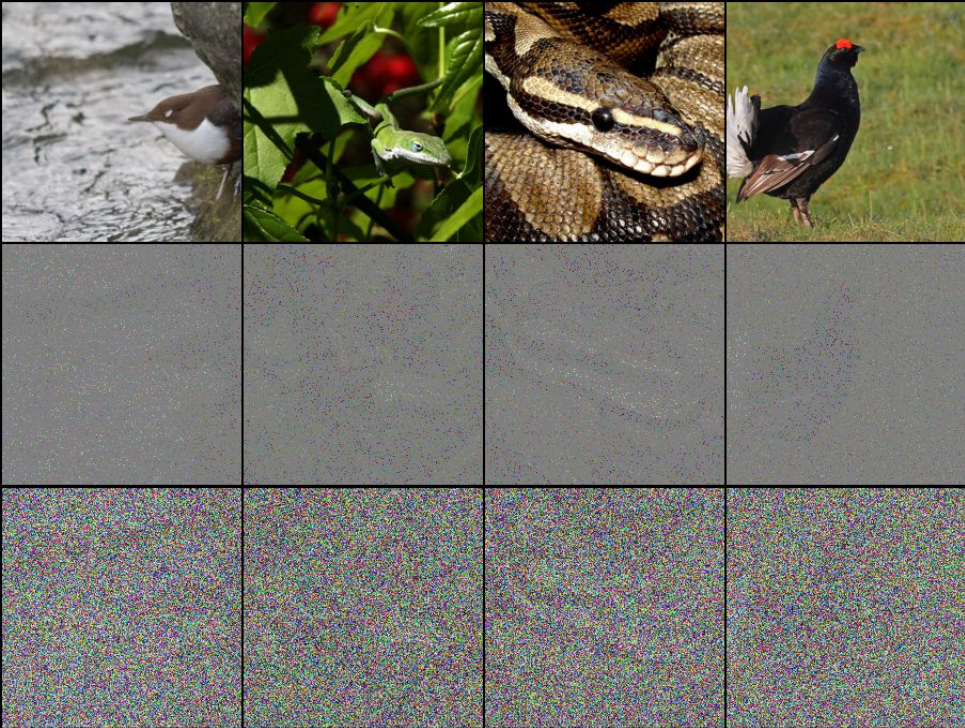}
    \caption{ImageNet images (top), under DSSN noise (middle) and uniform additive
noise (bottom). In all cases, $\sigma (=\lambda/\sqrt{3}) = 3.5$. The images with additive noise are scaled to fit in the $[0, 1]$ range in order to be displayed. The
images appear to be somewhat more visually discernible under DSSN noise, compared to additive noise.}
    \label{fig:noise_images}
\end{figure}
By contrast, in SSN, while there is still a fixed  $\frac{2\lambda-1}{2\lambda}$ probability that the smoothed component $\tilde{x}_i$ assumes a ``no information'' value, this value is always \textit{fixed} ($\tilde{x}_i=1/2$). Empirically, this dramatically improves performance when $\lambda$ is large. Intuitively, this is because when using uniform additive smoothing, the base classifier must \textit{learn to ignore} a very wide range of values (all values in the interval $[1-\lambda, \lambda]$) while in SSN, the base classifier only needs to learn to ignore a specific constant ``no information'' value $1/2$.\footnote{Note that this use of a ``no information'' value bears some similarity to the ``ablation'' value in \citet{levine2020robustness}, a randomized smoothing defense for $\ell_0$ adversarial attacks} Figure \ref{fig:compare_representations} compares the two representations schematically, and Figure \ref{fig:noise_images} compares the two noise representations visually.
\begin{figure}
    \centering
    \includegraphics[width=0.48\textwidth]{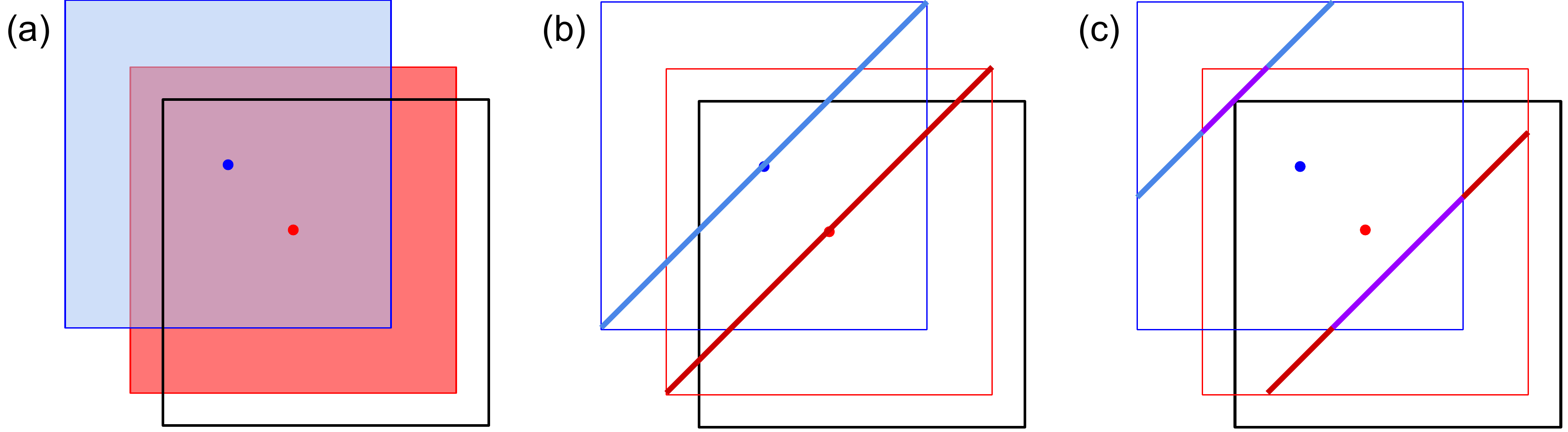}
    \caption{Comparison of independent uniform additive noise, correlated uniform additive noise, and correlated SSN, in $\R^2$ for $\lambda = 0.5$. In all figures, the blue and red points represent points $\vx$ and $\vx'$ and the black border represents the range $[0,1]^2$. (a) Distributions of $\vx+\epsilon$ and $\vx'+\epsilon$ for independent uniform additive noise. The robustness guarantee relies on the significant overlap of the shaded regions, representing the sampled distributions. Note that by Equation \ref{eq:equivalence_lambda_half}, these are also the distributions of for $2\tilde{\vx} - \mathbb{1}/2$ and $2\tilde{\vx}' - \mathbb{1}/2$ using SSN with $s_1$ and $s_2$ distributed independently. (b) Using correlated additive noise  ($\epsilon_1 = \epsilon_2$) does \textit{not} produce an effective robustness certificate: the sampled distributions $\vx+\epsilon$ and $\vx'+\epsilon$ (blue and red lines) do not overlap. (c) Using correlated splitting noise ($s_1 = s_2$) produces an effective robustness certificate, because distributions of $\tilde{\vx}$ and $\tilde{\vx}'$ overlap significantly. Here, for consistency in scaling, we show the distributions of  $2\tilde{\vx} - \mathbb{1}/2$ and $2\tilde{\vx}' - \mathbb{1}/2$ (blue line and red line), with the overlap shown as purple. Note that this is a {\it one-dimensional} smoothing distribution, and therefore can be efficiently derandomized. }
    \label{fig:l1_indep}
\end{figure}
\subsubsection{Can Additive Uniform Noise Be Derandomized?}
As shown above, in the $\lambda=0.5$ case, SSN leads to marginal distributions which are simple affine transformations of the marginal distributions of the uniform additive smoothing. One might then wonder whether we can derandomize additive uniform noise in a way similar to DSSN. In particular, one might wonder whether arbitrary joint distributions of $\epsilon$ can be used to generate valid robustness certificates with uniform additive smoothing, in the same way that arbitrary joint distributions of $\vs$ can be used with SSN. It turns out that this is not the case. We provide a counterexample (for $\lambda = 0.5$) below:
\begin{proposition} \label{prop:uniform_broken}
There exists a base classifier  $f: \R^2   \rightarrow [0,1]$ and a joint probability distribution $\gD$, such that $\epsilon_1,\epsilon_2 \sim \gD$ has marginals  $\epsilon_1 \sim \gU(-0.5,0.5)$ and $\epsilon_2 \sim \gU(-0.5,0.5)$ where for
\begin{equation}
 p(\vx) := \mathop{\E}_{\epsilon \sim\gD } \left[f(\vx + \epsilon)\right],
\end{equation}
$p(.)$ is \textbf{not} $1$-Lipschitz with respect to the $\ell_1$ norm.
\end{proposition}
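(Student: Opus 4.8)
The plan is to exploit precisely the geometric fact illustrated in Figure \ref{fig:l1_indep}b: when the additive noise components are made perfectly correlated, the smoothing distribution collapses onto a one-dimensional diagonal, so the translated supports of $\vx + \epsilon$ and $\vx' + \epsilon$ can be made \emph{disjoint} rather than overlapping. This is the exact point at which the union-bound reasoning behind Equation \ref{eq:union_bound} has no additive analogue. First I would take $\gD$ to be the law of $\epsilon = (u,u)$ with $u \sim \gU(-0.5,0.5)$. Since $\epsilon_1 = \epsilon_2 = u$, each marginal is exactly $\gU(-0.5,0.5)$ as required by the statement, yet $\epsilon$ is supported only on the segment $\{(u,u): u \in [-1/2,1/2]\}$.

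Next I would choose two points whose noise-translated supports lie on distinct parallel lines. For fixed $\vx$, the support of $\vx + \epsilon$ is the diagonal segment $S_\vx := \{(\vx_1 + u, \vx_2 + u) : u \in [-1/2,1/2]\}$, which lies on the line $\{y_2 - y_1 = \vx_2 - \vx_1\}$. Two such segments $S_\vx$ and $S_{\vx'}$ sit on distinct parallel lines, and are therefore disjoint, exactly when $\delta_1 \neq \delta_2$. I would take, for concreteness, $\vx = (1/4,1/4)$ and $\vx' = (3/4,1/4)$, so that $\delta = (1/2,0)$ and $\|\delta\|_1 = 1/2 < 1$, while $S_\vx \subseteq \{y_1 = y_2\}$ and $S_{\vx'} \subseteq \{y_1 = y_2 + 1/2\}$ are disjoint.

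Finally I would separate the two supports with a single measurable base classifier. Defining the half-plane indicator $f(\vy) := \1_{\{y_1 - y_2 < 1/4\}}$, which is $[0,1]$-valued, gives $f \equiv 1$ on $S_\vx$ and $f \equiv 0$ on $S_{\vx'}$, hence $p(\vx) = \E_{\epsilon \sim \gD}[f(\vx+\epsilon)] = 1$ and $p(\vx') = 0$. Then $|p(\vx) - p(\vx')| = 1 > 1/2 = 1 \cdot \|\delta\|_1$, so $p$ is not $1$-Lipschitz with respect to the $\ell_1$ norm, which is the desired counterexample (recall Theorem \ref{thm:uniform} would force $1$-Lipschitzness at $\lambda = 0.5$ under independence).

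The only step requiring genuine care is verifying disjointness of the two supports, and this is also the conceptual heart of the proposition: correlation concentrates the noise on a one-dimensional set, so instead of the overlapping box-shaped densities that guarantee robustness in the independent case (Figure \ref{fig:l1_indep}a), we obtain two thin parallel segments that can be pushed apart while $\|\delta\|_1$ stays well below $1$. Once this geometry is identified there is no real obstacle; the remaining verifications — the marginals of $\gD$, measurability of $f$, and the two expectations — are all immediate.
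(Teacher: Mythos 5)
Your proposal is correct and takes essentially the same approach as the paper: both use perfectly correlated noise $\epsilon_1 = \epsilon_2 \sim \gU(-0.5,0.5)$ together with a base classifier that thresholds the difference $z_1 - z_2$, so that the two diagonal supports are completely separated while $\|\delta\|_1 < 1$ (the paper uses $\vx = (0.8,0.2)$, $\vx' = (0.6,0.4)$ and $f(\vz) = \1_{z_1 > 0.4 + z_2}$; you use $\vx = (1/4,1/4)$, $\vx' = (3/4,1/4)$ and $f(\vy) = \1_{y_1 - y_2 < 1/4}$, which differ only in the numerical constants). Your added geometric discussion of why disjointness of the parallel segments is the crux matches the paper's own intuition in Figure \ref{fig:l1_indep}.
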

\begin{proof}
Consider the base classifier $f(\vz) := \1_{z_1 > 0.4 + z_2 }$, and let $\epsilon$ be distributed as $\epsilon_1 \sim \gU(-0.5,0.5)$ and $\epsilon_2 = \epsilon_1$. Consider the points $\vx = [0.8,0.2]^T$ and $\vx' = [0.6,0,4]^T$.  Note that $\|\delta\|_1 = 0.4$. However, 
\begin{equation}
    \begin{split}
        p(\vx) = \mathop{\E}_{\epsilon}[f(\vx+\epsilon)] = \mathop{\E}_{\epsilon_1}[f(.8+\epsilon_1, .2+\epsilon_1) ] = 1 \\
         p(\vx') = \mathop{\E}_{\epsilon}[f(\vx'+\epsilon)] = \mathop{\E}_{\epsilon_1}[f(.6+\epsilon_1, .4+\epsilon_1) ] = 0 \\      
    \end{split}
\end{equation}
Thus, $|p(\vx)-p(\vx')| > \|\delta\|_1$.
\end{proof}

\begin{figure}[h]
    \centering
    \includegraphics[width=0.45\textwidth]{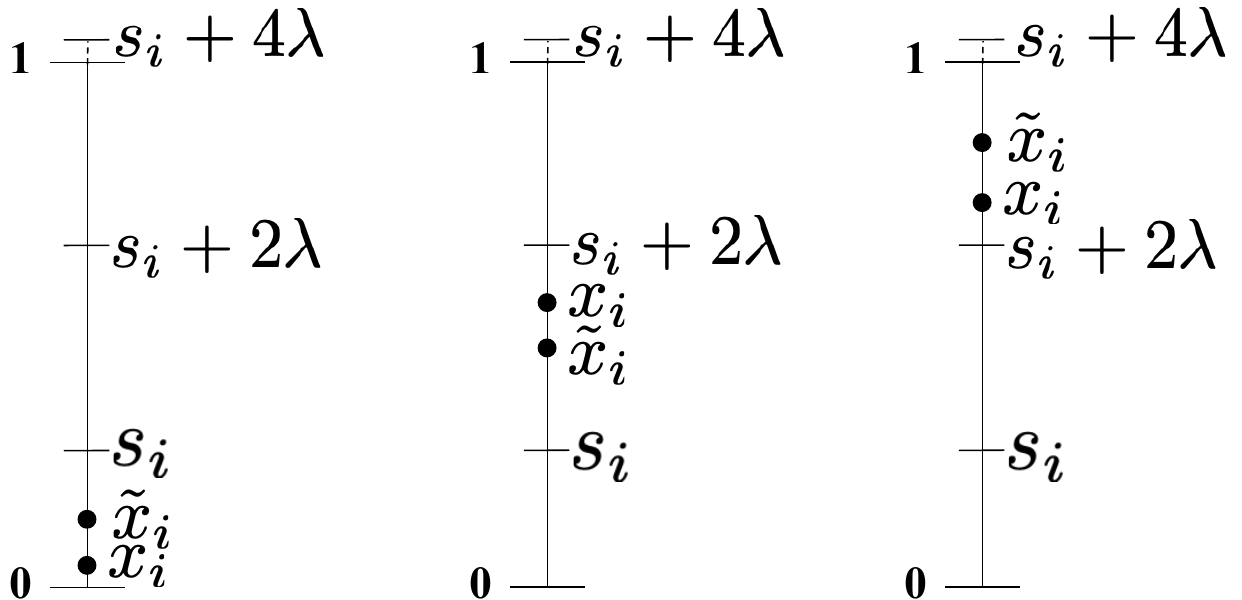}
    \caption{Example of $\tilde{x}_i$ in the $\lambda < 0.5$ case. In this case, the interval $[0,1]$ is split into sub-intervals $[0,s_i]$, $(s_i,s_i+2\lambda]$, and $(s_i+2\lambda, 1]$. $\tilde{x}_i$ is assigned to the middle of whichever of these intervals $x_i$ falls into.}
    \label{fig:split_randomization_small}
\end{figure}
In the appendix, we provide intuition for this, by demonstrating that despite having similar \textit{marginal} distributions, the \textit{joint} distributions of $\tilde{\vx}$ and $(\vx + \epsilon)$ which can be generated by SSN and additive uniform noise, respectively, are in fact quite different. An example is shown in Figure \ref{fig:l1_indep}.

\subsection{General Case, including  $\lambda < 0.5$}
In the case $\lambda < 0.5$, we split the $[0,1]$ interval not only at $s_i \in [0,2\lambda]$, but also at every value $s_i+2\lambda n$, for $n \in \sN$. An example is shown in Figure \ref{fig:split_randomization_small}. Note that this formulation covers the $\lambda \geq 0.5$ case as well (the splits for $n \geq 1$ are simply not relevant).
\setcounter{theorem}{\thetheorem-1}

\begin{theorem}[General Case]  \label{thm:main_case_2}
For any $f: \R^d   \rightarrow [0,1]$, and $\lambda> 0$ let $\vs \in [0,2\lambda]^d$ be a random variable,  with a fixed distribution such that:
\begin{equation}
      s_i \sim \gU(0,2\lambda), \,\,\,\, \forall i.
\end{equation}
Note that the components $s_1, ..., s_d$ are \textbf{not} required to be  distributed independently from each other. Then, define:
\begin{align}
\tilde{x}_i &:=
\frac{ \min(2\lambda \ceil{\frac{x_i - s_i}{2\lambda} } + s_i, 1) }{2} \\
&+ \frac{\max(2\lambda \ceil{\frac{x_i - s_i}{2\lambda} -1} + s_i, 0)}{2}\
,\,\,\,\, \forall i \\
p(\vx) &:=\mathop{\E}_{\vs}\left[ f(\tilde{\vx})\right].
\end{align}
Then, $p(.)$ is $1/(2\lambda)$-Lipschitz with respect to the $\ell_1$ norm.
\end{theorem}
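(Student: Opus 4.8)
The plan is to mirror the proof of Theorem~\ref{thm:main_case_1} almost verbatim, since the only thing that changes between the cases is the geometry of how $\vs$ partitions $[0,1]$; the union-bound and the conditioning argument are unchanged. First I would fix two points $\vx,\vx'$ and set $\delta := \vx'-\vx$. For a fixed $\vs$, the defining formula sends each $x_i$ to the center of the sub-interval containing it in the partition of $[0,1]$ induced by the periodic grid of split points $\{s_i + 2\lambda n : n \in \sZ\}\cap(0,1)$: indeed $2\lambda\lceil (x_i-s_i)/(2\lambda)\rceil + s_i$ is the right endpoint of that sub-interval and $2\lambda\lceil (x_i-s_i)/(2\lambda)-1\rceil + s_i$ its left endpoint, each clipped to $[0,1]$ by the $\min$ and $\max$, so $\tilde{x}_i$ is exactly the midpoint of the clipped sub-interval. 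The first observation to record is that if no split point lies in the window $[\min(x_i,x'_i),\max(x_i,x'_i))$ then $x_i$ and $x'_i$ occupy the same sub-interval and hence $\tilde{x}_i=\tilde{x}'_i$; contrapositively,
\begin{equation}
\{\tilde{x}_i \neq \tilde{x}'_i\} \subseteq \{\,\exists n:\ s_i+2\lambda n \in [\min(x_i,x'_i),\max(x_i,x'_i))\,\}.
\end{equation}

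The one genuinely new step is the per-coordinate collision probability. Because the split points form a grid of period $2\lambda$ and $s_i \sim \gU(0,2\lambda)$ is uniform over a single period, I would pass to the circle $\R/(2\lambda\sZ)$: the event on the right above is precisely that $s_i \bmod 2\lambda$ falls in the image of the length-$|\delta_i|$ window, whose measure is $\min(|\delta_i|,2\lambda)$. Hence
\begin{equation}
\Pr_\vs[\tilde{x}_i \neq \tilde{x}'_i] \;\leq\; \min\!\left(1, \tfrac{|\delta_i|}{2\lambda}\right) \;\leq\; \frac{|\delta_i|}{2\lambda}.
\end{equation}
This is exactly where the argument extends Theorem~\ref{thm:main_case_1}: when $\lambda<0.5$ the window can be longer than one period and meet several split points, so the clean equality of the $\lambda\geq 0.5$ case becomes the inequality above, but the upper bound we actually need is unaffected.

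From here the proof is identical to the $\lambda \geq 0.5$ case. A union bound over the $d$ coordinates gives $\Pr_\vs[\tilde{\vx}\neq\tilde{\vx}'] \leq \sum_i |\delta_i|/(2\lambda) = \|\delta\|_1/(2\lambda)$, and this holds for \emph{any} joint law of $\vs$ with the prescribed uniform marginals. Then, splitting $\E_\vs[f(\tilde{\vx})-f(\tilde{\vx}')]$ according to whether $\tilde{\vx}=\tilde{\vx}'$, using that the conditional expectation on $\{\tilde{\vx}=\tilde{\vx}'\}$ vanishes, and bounding the conditional expectation on the complement by $1$ since $f(\cdot)\in[0,1]$, I obtain
\begin{equation}
|p(\vx)-p(\vx')| = \Pr_\vs[\tilde{\vx}\neq\tilde{\vx}']\,\left|\E_\vs\!\left[f(\tilde{\vx})-f(\tilde{\vx}')\,\middle|\,\tilde{\vx}\neq\tilde{\vx}'\right]\right| \leq \frac{\|\delta\|_1}{2\lambda},
\end{equation}
which is the claimed $1/(2\lambda)$-Lipschitz bound.

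The main obstacle, such as it is, lies in the measure computation of the middle paragraph rather than in any new analytic difficulty. I expect the only points requiring care to be (i) confirming that restricting the grid to $(0,1)$ and clipping the boundary sub-intervals do not change the collision event — they cannot, since the window is contained in $[0,1]$ and I only use the inclusion in one direction, so I never need distinct clipped sub-intervals to have distinct centers — and (ii) handling the multi-period case $|\delta_i|>2\lambda$ correctly, which the circle/periodicity argument dispatches via the $\min(|\delta_i|,2\lambda)$ cap. Everything else is a transcription of the earlier proof.
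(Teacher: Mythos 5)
Your proof is correct. Both arguments share the scaffolding inherited from Theorem~\ref{thm:main_case_1} --- bound the per-coordinate collision probability, apply a union bound valid for arbitrary joint laws of $\vs$, then split the expectation on the event $\{\tilde{\vx}=\tilde{\vx}'\}$ --- but you establish the key per-coordinate bound by a genuinely different route. The paper first peels off the trivial case $\|\delta\|_1 \geq 2\lambda$ (so that $|\delta_i| < 2\lambda$ holds in every coordinate), and then, writing the sub-interval index as $\ceil{\tfrac{x_i - s_i}{2\lambda}}$, runs an explicit two-case analysis (according to whether $\ceil{\tfrac{x_i}{2\lambda}}$ and $\ceil{\tfrac{x_i'}{2\lambda}}$ coincide or differ by one) to show the indices differ with probability exactly $\tfrac{|\delta_i|}{2\lambda}$. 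You instead pass to the circle $\R/(2\lambda\sZ)$: the split points form a period-$2\lambda$ grid with uniform offset, so the chance that the grid meets the window $[\min(x_i,x_i'),\max(x_i,x_i'))$ is $\min\bigl(1,\tfrac{|\delta_i|}{2\lambda}\bigr) \leq \tfrac{|\delta_i|}{2\lambda}$, uniformly in the size of $|\delta_i|$. This buys you two simplifications: the global case split on $\|\delta\|_1$ disappears (your per-coordinate bound is valid even when $|\delta_i| \geq 2\lambda$), and the ceiling-value bookkeeping collapses into a single measure computation. What the paper's version buys in exchange is an exact collision probability via only elementary manipulations of the ceiling function, which is then reused nearly verbatim for the quantized corollary, where the discrete counting of admissible $s_i$ values replaces the Lebesgue-measure step; your circle argument would need a small adaptation (counting grid residues rather than measuring arcs) to serve that purpose. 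Your side remarks --- that the event inclusion is only needed in one direction, so coincidences of clipped midpoints are harmless, and that the clipping to $[0,1]$ cannot affect the collision event --- are accurate and address the only places where the periodic picture could have misled.
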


The proof for this case, as well as its derandomization, are provided in the appendix. As with the $\lambda \geq 0.5$ case, the derandomization allows for $p(\vx)$ to be computed exactly using $2\lambda q$ evaluations of $f$.
\section{Experiments}
We evaluated the performance of our method on CIFAR-10 and ImageNet datasets, matching all experimental conditions from \cite{pmlr-v119-yang20c} as closely as possible (further details are given in the appendix.) Certification performance data is given in Table \ref{tab:cifar_results} for CIFAR-10 and Figure \ref{fig:imagenet} for Imagenet.
Note that instead of using the hyperparameter $\lambda$, we report experimental results in terms of $\sigma = \lambda/\sqrt{3}$: this is to match \cite{pmlr-v119-yang20c}, where this gives the standard deviation of the uniform noise. 

We find that DSSN significantly outperforms \citet{pmlr-v119-yang20c} on both datasets, particularly when certifying for large perturbation radii. For example, at $\rho=4.0$, DSSN provides a 36\% certified accuracy on CIFAR-10, while uniform additive noise provides only 27\% certified accuracy.
In addition to these numerical improvements, DSSN certificates are \textit{exact} while randomized certificates hold only with \textit{high-probability}. Following \citet{pmlr-v119-yang20c}, all certificates reported here for randomized methods hold with $99.9\%$  probability: there is no such failure rate for DSSN.

Additionally, the certification runtime of DSSN is reduced compared to \citet{pmlr-v119-yang20c}'s method. Although in contrast to \citet{pmlr-v119-yang20c}, our certification time scales linearly with the noise level, the fact that \citet{pmlr-v119-yang20c} uses 100,000 smoothing samples makes our method much faster even at the largest tested noise levels: see Figure \ref{fig:time_per_appendix}. For example, on CIFAR-10 at $\sigma=3.5$, we achieve an average runtime of 0.41 seconds per image, while \citet{pmlr-v119-yang20c}'s method requires 13.44 seconds per image.
\begin{figure}
    \centering
    \includegraphics[width=0.48\textwidth]{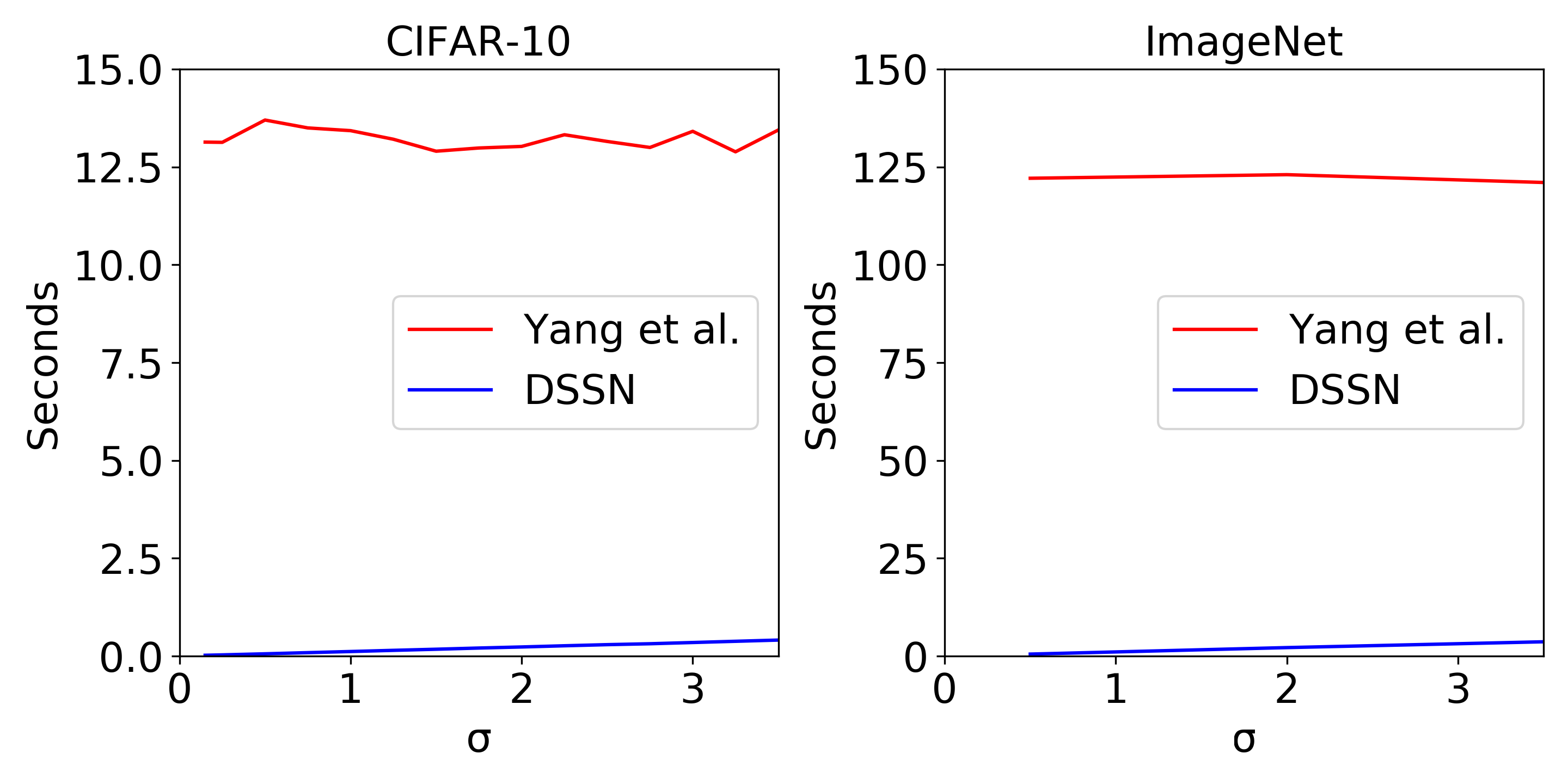}
    \caption{Comparison of the certification time per image of DSSN and \citet{pmlr-v119-yang20c}'s uniform additive noise method. We used a single NVIDIA 2080 Ti GPU. }
    \label{fig:time_per_appendix}
\end{figure}

\citet{pmlr-v119-yang20c} tests using both standard training on noisy samples as well as stability training \citep{li2019certified}: while our method dominates in both settings, we find that the stability training leads to less of an improvement in our methods, and is in some cases detrimental. For example, in Table \ref{tab:cifar_results}, the best certified accuracy is always higher under stability training for uniform additive noise, while this is not the case for DSSN at $\rho < 3.0$.  Exploring the cause of this may be an interesting direction for future work.\footnote{On CIFAR-10, \citet{pmlr-v119-yang20c} also tests using semi-supervised and transfer learning approaches which incorporate data from other datasets. We consider this beyond the scope of this work where we consider only the supervised learning setting.} 

In Figure \ref{fig:random_derandom}, we compare the uniform additive smoothing method to DSSN, as well the \textit{randomized} form of SSN with independent splitting noise. At mid-range noise levels, the primary benefit of our method is due to derandomization; while at large noise levels, the differences in noise representation discussed in Section \ref{sec:marg_distrib} become more relevant. In the appendix, we provide complete certification data at all tested noise levels, using both DSSN and SSN with independent noise, as well as more runtime data. Additionally we further explore the effect of the noise representation: given that Equation \ref{eq:noise_mapping} shows a simple mapping between (the marginal distributions of) SSN and uniform additive noise, we tested whether the gap in performance due to noise representations can be eliminated by a ``denoising layer'', as trained in \cite{salman2020denoised}. We did not find evidence of this: the gap persists even when using denoising.

\begin{table*}[]
\small
    \begin{tabular}{|c|l|l|l|l|l|l|l|l|}
\hline
&$\rho =$ 0.5&$\rho =$ 1.0&$\rho =$ 1.5&$\rho =$ 2.0&$\rho =$ 2.5&$\rho =$ 3.0&$\rho =$ 3.5&$\rho =$ 4.0\\
\hline
Uniform&70.54\%&58.43\%&50.73\%&43.16\%&33.24\%&25.98\%&20.66\%&17.12\%\\
Additive Noise&(83.97\%&(78.70\%&(73.05\%&(73.05\%&(69.56\%&(62.48\%&(53.38\%&(53.38\%\\
&@ $\sigma$=0.5)&@ $\sigma$=1.0)&@ $\sigma$=1.75)&@ $\sigma$=1.75)&@ $\sigma$=2.0)&@ $\sigma$=2.5)&@ $\sigma$=3.5)&@ $\sigma$=3.5)\\
\hline
Uniform&71.09\%&60.36\%&52.86\%&47.08\%&42.26\%&38.55\%&33.76\%&27.12\%\\
Additive Noise&(78.79\%&(74.27\%&(65.88\%&(63.32\%&(57.49\%&(57.49\%&(57.49\%&(57.49\%\\
(+Stability Training)&@ $\sigma$=0.5)&@ $\sigma$=0.75)&@ $\sigma$=1.5)&@ $\sigma$=1.75)&@ $\sigma$=2.5)&@ $\sigma$=2.5)&@ $\sigma$=2.5)&@ $\sigma$=2.5)\\
\hline
{\bf DSSN - Our Method}&\textbf{72.25\%}&\textbf{63.07\%}&\textbf{56.21\%}&\textbf{51.33\%}&\textbf{46.76\%}&42.66\%&38.26\%&33.64\%\\
&(81.50\%&(77.85\%&(71.17\%&(67.98\%&(65.40\%&(65.40\%&(65.40\%&(65.40\%\\
&@ $\sigma$=0.75)&@ $\sigma$=1.25)&@ $\sigma$=2.25)&@ $\sigma$=3.0)&@ $\sigma$=3.5)&@ $\sigma$=3.5)&@ $\sigma$=3.5)&@ $\sigma$=3.5)\\
\hline
{\bf DSSN - Our Method}&71.23\%&61.04\%&54.21\%&49.39\%&45.45\%&\textbf{42.67\%}&\textbf{39.46\%}&\textbf{36.46\%}\\
(+Stability Training)&(79.00\%&(71.29\%&(66.04\%&(64.26\%&(59.88\%&(57.16\%&(56.29\%&(54.96\%\\
&@ $\sigma$=0.5)&@ $\sigma$=1.0)&@ $\sigma$=1.5)&@ $\sigma$=1.75)&@ $\sigma$=2.5)&@ $\sigma$=3.0)&@ $\sigma$=3.25)&@ $\sigma$=3.5)\\
\hline

    \end{tabular}
    \caption{Summary of results for CIFAR-10. Matching \citet{pmlr-v119-yang20c}, we test on 15 noise levels ($\sigma \in \{0.15, 0.25n  \text{ for }1 \leq n \leq 14\}$). We report the best certified accuracy at a selection of radii $\rho$, as well as the clean accuracy and noise level of the associated classifier. Our method dominates at all radii, although stability training seems to be less useful for our method. Note that these statistics are based on reproducing \citet{pmlr-v119-yang20c}'s results; they are all within $\pm 1.5$ percentage points of \citet{pmlr-v119-yang20c}'s reported statistics.
    \label{tab:cifar_results}}
\end{table*}
\begin{figure*}
    \centering
    \includegraphics[width=0.9\textwidth, trim={0 0.3cm 0 0.3cm},clip]{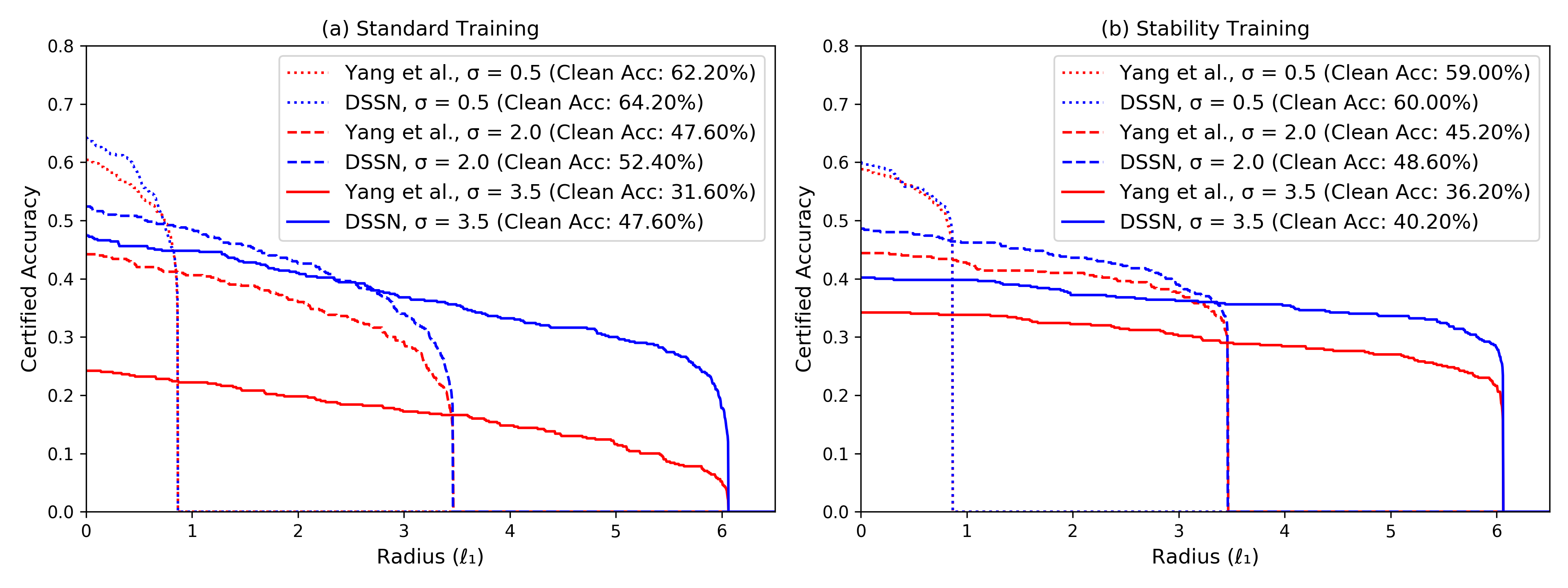}
    \caption{Results on ImageNet. We report results at three noise levels, with and without stability training. Our method dominates in all settings: however, especially at large noise, stability training seems to \textit{hurt} our clean accuracy, rather than help it. }
    \label{fig:imagenet}
\end{figure*}
\begin{figure*}
    \centering
    \includegraphics[width=.98\textwidth,trim={0 0.2cm 0 0.4cm},clip]{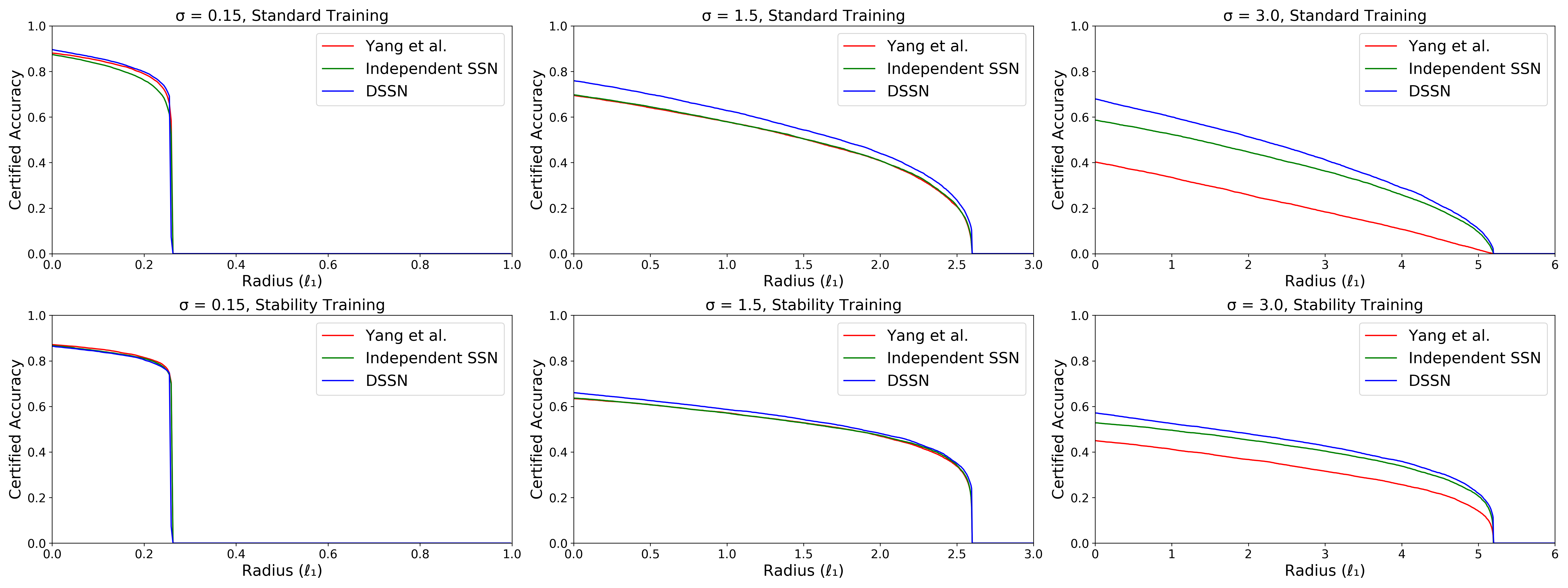}
    \caption{ Comparison on CIFAR-10 of additive smoothing \citep{pmlr-v119-yang20c} to DSSN, as well as SSN with \textit{random, independent} splitting noise, using the estimation scheme from \cite{pmlr-v119-yang20c}.
    At very small levels of noise ($\sigma = 0.15$), there is little difference between the methods: in fact, with stability training, additive smoothing slightly outperforms DSSN. At intermediate noise levels, additive noise and independent SSN perform very similarly, but DSSN outperforms both. This suggests that, at this level, the primary benefit of DSSN is to eliminate estimation error (Section \ref{sec:yang}). At high noise levels, the largest gap is between additive noise and independent SSN, suggesting that in this regime, most of the performance benefits of DSSN are due to improved base classifier performance (Section \ref{sec:marg_distrib}).}
    \label{fig:random_derandom}
\end{figure*}

\section{Conclusion}
In this work, we have improved the state-of-the-art smoothing-based robustness certificate for the $\ell_1$ threat model, and provided the first scalable, general-use derandomized ``randomized smoothing'' certificate for a norm-based adversarial threat model. To accomplish this, we proposed a novel \textit{non-additive} smoothing method. Determining whether such methods can be extended to other $\ell_p$ norms remains an open question for future work.
\section*{Acknowledgements}
 This project was supported in part by NSF CAREER AWARD 1942230, HR00111990077, HR001119S0026, HR00112090132, NIST 60NANB20D134 and Simons Fellowship on ``Foundations of Deep Learning.''
\bibliography{main}
\bibliographystyle{icml2021}
\appendix
\section{Proofs}
\setcounter{theorem}{0}
\begin{theorem}[\citet{lee2019tight}] For any $f: \R^d   \rightarrow [0,1]$ and parameter $\lambda \in \R^{+}$, define:
\begin{equation}
    p(\vx) := \mathop{\E}_{\epsilon \sim \gU^d(-\lambda,\lambda)} \left[f(\vx + \epsilon)\right].
\end{equation}
Then, $p(.)$ is $1/(2\lambda)$-Lipschitz with respect to the $\ell_1$ norm.
\end{theorem}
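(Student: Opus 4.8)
The plan is to reproduce the coupling-plus-union-bound structure used for Theorem \ref{thm:main_case_1}, adapted to additive noise by exploiting that $\gU^d(-\lambda,\lambda)$ is a \emph{product} measure. Fix two points $\vx,\vx'$ and set $\delta := \vx'-\vx$. Coordinatewise, $x_i+\epsilon_i \sim \gU(x_i-\lambda,\,x_i+\lambda)$ and $x_i'+\epsilon_i \sim \gU(x_i'-\lambda,\,x_i'+\lambda)$ are uniform laws on intervals of equal length $2\lambda$ shifted by $\delta_i$. These overlap on an interval of length $\max(2\lambda-|\delta_i|,0)$, so their total variation distance is $\min(|\delta_i|/(2\lambda),\,1) \le |\delta_i|/(2\lambda)$.

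The essential difference from the SSN proof is that there the \emph{same} splitting vector $\vs$ makes $\tilde{\vx}$ and $\tilde{\vx}'$ agree with high probability, whereas here using the same $\epsilon$ for both points gives a deterministic shift by $\delta$ and hence disagreement with probability one. So I would instead introduce a coupling of two noise draws. By the product structure, I can couple $\vy := \vx+\epsilon$ and $\vy' := \vx'+\epsilon'$ coordinate by coordinate, using in each coordinate a maximal coupling of the two shifted uniforms so that the $i$-th coordinates agree with probability $1-\min(|\delta_i|/(2\lambda),1)$. By the union bound, exactly as in Equation \ref{eq:union_bound},
\[
\Pr[\vy \neq \vy'] \le \sum_{i=1}^d \frac{|\delta_i|}{2\lambda} = \frac{\|\delta\|_1}{2\lambda}.
\]

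Since a coupling preserves marginals, $p(\vx)=\mathop{\E}[f(\vy)]$ and $p(\vx')=\mathop{\E}[f(\vy')]$, so $|p(\vx)-p(\vx')|=|\mathop{\E}[f(\vy)-f(\vy')]|$. Conditioning on whether $\vy=\vy'$, the conditional expectation of $f(\vy)-f(\vy')$ vanishes on $\{\vy=\vy'\}$, while on $\{\vy\neq\vy'\}$ it is bounded in absolute value by $1$ because $f(\cdot)\in[0,1]$; hence $|p(\vx)-p(\vx')|\le \Pr[\vy\neq\vy']\le \|\delta\|_1/(2\lambda)$, giving the claimed $1/(2\lambda)$-Lipschitzness.

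The main obstacle is the per-coordinate construction: verifying that the one-dimensional TV distance is exactly $\min(|\delta_i|/(2\lambda),1)$ and that the product of the per-coordinate maximal couplings is a legitimate coupling of the two $d$-dimensional product measures; everything else transcribes directly from Theorem \ref{thm:main_case_1}. To sidestep invoking an explicit coupling, an equivalent and fully elementary route is a telescoping (hybrid) argument: interpolate through $\vx=\vx^{(0)},\dots,\vx^{(d)}=\vx'$ replacing one coordinate at a time, integrate out the remaining coordinates to reduce each step to a one-dimensional bounded function $g\in[0,1]$, and bound each step $\bigl|\mathop{\E}_{\gU(x_k-\lambda,x_k+\lambda)}[g]-\mathop{\E}_{\gU(x_k'-\lambda,x_k'+\lambda)}[g]\bigr|$ by $|\delta_k|/(2\lambda)$ using the $L^1$ distance between the two uniform densities, then sum the $d$ steps.
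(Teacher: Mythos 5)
Your proof is correct, but it takes a genuinely different route from the paper's. The paper argues by a case split: when $\|\delta\|_1 \geq 2\lambda$ the bound is trivial since $p(\cdot)\in[0,1]$; when $\|\delta\|_1 < 2\lambda$ it views $p(\vx)$ and $p(\vx')$ as averages over the $\ell_\infty$ balls $\gB(\vx)$ and $\gB(\vx')$ of radius $\lambda$, bounds $|p(\vx)-p(\vx')|$ by the probability mass of $\gB(\vx)\setminus\gB(\vx')$, computes that mass \emph{exactly} as $1-\prod_{i=1}^d\bigl(1-|\delta_i|/(2\lambda)\bigr)$ from the volume of the intersection hyperrectangle, and finishes with the inductive inequality $\prod_i(1-a_i)\geq 1-\sum_i a_i$. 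You instead bound the total variation distance between the two product measures by the sum of coordinate-wise total variation distances, realized through independent per-coordinate maximal couplings of the shifted uniforms plus a union bound (your telescoping variant is an equally valid, coupling-free way to get the same coordinate-wise reduction). Both arguments ultimately control $|p(\vx)-p(\vx')|$ by a total variation distance, but they trade different things: the paper's geometric computation yields the exact overlap $1-\prod_i\bigl(1-|\delta_i|/(2\lambda)\bigr)$, a slightly stronger intermediate bound than $\|\delta\|_1/(2\lambda)$, while your argument needs no case analysis (the $\min(|\delta_i|/(2\lambda),1)$ absorbs large $|\delta_i|$), transcribes the structure of the SSN proof of Theorem \ref{thm:main_case_1} almost verbatim, and makes transparent exactly where independence of the $\epsilon_i$ is used --- namely in forming the product coupling. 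That last point is conceptually valuable: it isolates the step that has no analogue for correlated additive noise (consistent with Proposition \ref{prop:uniform_broken}) yet is automatic for splitting noise, which is precisely the distinction the paper exploits for derandomization.
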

\begin{proof}
Consider two arbitrary points $\vx, \vx'$ where $\delta:=\vx'-\vx$.  We consider two cases. 
\begin{itemize}
    \item Case 1: $\|\delta\|_1 \geq 2\lambda$:
    Then, because $f(\cdot)\in [0,1]$, and therefore $p(\cdot)\in [0,1]$, we have:
    \begin{equation}
        |p(\vx)- p(\vx')| \leq 1 \leq \frac{\|\delta\|_1}{2\lambda}  
    \end{equation}
    \item Case 2:  $\|\delta\|_1 < 2\lambda$:
    In this case, for each $i$, $|\delta_i| < 2\lambda$. Define $\gB(\vx)$ as the $\ell_\infty$ ball of radius $\lambda$ around $\vx$, and $\gU(\gB(\vx))$ as the uniform distribution on this ball (and, similarly $\gU(\cdot)$, on any other set). In other words:
    \begin{equation}
        p(\vx) = \mathop{\E}_{\vz \sim \gU(\gB(\vx))} f(\vz)  
    \end{equation}
    Then,
    \begin{equation}
    \begin{split}
      &|p(\vx)- p(\vx')|  \\&= |\mathop{\E}_{\vz \sim \gU(\gB(\vx))} f(\vz) - \mathop{\E}_{\vz \sim \gU(\gB(\vx'))} f(\vz) |\\ &=
      \Big|\Big(\Pr_{\vz \sim \gU(\gB(\vx))} \mkern-18mu \vz \in \gB(\vx)\setminus \gB(\vx')\mathop{\E}_{
      \substack{\vz \sim \gU(\gB(\vx)\\\setminus \gB(\vx'))
      }} \mkern-18mu f(\vz) \\&+
       \Pr_{\vz \sim \gU(\gB(\vx))} \mkern-18mu \vz \in \gB(\vx)\cap \gB(\vx')\mathop{\E}_{ \substack{\vz \sim \gU(\gB(\vx)\\\cap \gB(\vx'))
      }} \mkern-18mu f(\vz)\Big) \\&-
       \Big(\Pr_{\vz \sim \gU(\gB(\vx'))} \mkern-18mu \vz \in \gB(\vx')\setminus \gB(\vx)\mathop{\E}_{ \substack{\vz \sim \gU(\gB(\vx')\\\setminus \gB(\vx))
      }} \mkern-18mu f(\vz) \\&+
       \Pr_{\vz \sim \gU(\gB(\vx'))} \mkern-18mu \vz \in \gB(\vx)\cap \gB(\vx')\mathop{\E}_{ \substack{\vz \sim \gU(\gB(\vx)\\\cap \gB(\vx'))
      }} \mkern-18mu f(\vz)\Big)\Big|\\&=
       \Big|\Pr_{\vz \sim \gU(\gB(\vx))} \mkern-18mu \vz \in \gB(\vx)\setminus \gB(\vx')\mathop{\E}_{ \substack{\vz \sim \gU(\gB(\vx)\\\setminus \gB(\vx'))
      }} \mkern-18mu f(\vz) \\&-
       \Pr_{\vz \sim \gU(\gB(\vx'))} \mkern-18mu \vz \in \gB(\vx')\setminus \gB(\vx)\mathop{\E}_{ \substack{\vz \sim \gU(\gB(\vx')\\\setminus \gB(\vx))
      }} \mkern-18mu f(\vz) \Big|
    \end{split}
    \end{equation}
 Note that:
 \begin{equation}
 \begin{split}
      &\Pr_{\vz \sim \gU(\gB(\vx'))}\vz \in \gB(\vx')\setminus \gB(\vx) \\&= \Pr_{\vz \sim \gU(\gB(\vx))}\vz \in \gB(\vx)\setminus \gB(\vx')
 \end{split}
 \end{equation}
 Because both represent the probability of a uniform random variable on an $\ell_\infty$ ball of radius $\lambda$ taking a value outside of the region $\gB(\vx)\cap \gB(\vx')$ (which is entirely contained within both balls.) Then:
    \begin{equation} \label{eq:thm_1_pf_pr}
    \begin{split}
      &|p(\vx)- p(\vx')|  \\&=
    \Pr_{\vz \sim \gU(\gB(\vx))}\vz \in \gB(\vx)\setminus \gB(\vx') \\&\times \Big|\mathop{\E}_{\substack{\vz \sim \gU(\gB(\vx)\setminus\\ \gB(\vx'))}} f(\vz) -
       \mathop{\E}_{\substack{\vz \sim \gU(\gB(\vx')\\\setminus \gB(\vx))}} f(\vz) \Big| \\&\leq
       \Pr_{\vz \sim \gU(\gB(\vx))}\vz \in \gB(\vx)\setminus \gB(\vx').
      \end{split}
      \end{equation}
      Where, in the last line, we used the fact that $f(\cdot)\in [0,1]$. Let $\gV(\gS)$ represent the volume of a set $\gS$. Note that $\gB(\vx) \cap \gB(\vx')$ is a $d$-hyperrectangle, with each edge of length
      \begin{equation}
         \min(x_i,x'_i) +\lambda - (\max(x_i,x'_i)-\lambda) = 2\lambda - |\delta_i|
      \end{equation}
      Then following Equation \ref{eq:thm_1_pf_pr},
      \begin{equation}
      \begin{split}
     &|p(\vx)- p(\vx')|  \\&\leq 
     \frac{\gV(\gB(\vx)) - \gV(\gB(\vx) \cap \gB(\vx') )}{\gV(\gB(\vx))} \\&=
     1 - \frac{\mathop{\Pi}_{i=1}^d (2\lambda - |\delta_i|)}{(2\lambda)^d}
  \\&= 
      1 - \mathop{\Pi}_{i=1}^d\left(1 - \frac{|\delta_i|}{2\lambda}\right)
      \end{split}
      \end{equation}
      Note that, for $1 \leq d' \leq d$:
      \begin{equation}
      \begin{split}
          &\mathop{\Pi}_{i=1}^{d'}\left(1 - \frac{|\delta_i|}{2\lambda}\right) \\&= \mathop{\Pi}_{i=1}^{d'-1}\left(1 - \frac{|\delta_i|}{2\lambda}\right) - \frac{|\delta_{d'}|}{2\lambda} \mathop{\Pi}_{i=1}^{d'-1}\left(1 - \frac{|\delta_i|}{2\lambda}\right) \\&\geq
          \mathop{\Pi}_{i=1}^{d'-1}\left(1 - \frac{|\delta_i|}{2\lambda}\right) - \frac{|\delta_{d'}|}{2\lambda}
      \end{split}
      \end{equation}
      By induction:
      \begin{equation}
       \mathop{\Pi}_{i=1}^{d}\left(1 - \frac{|\delta_i|}{2\lambda}\right) \geq 
       1 - \sum_{i=1}^{d}\ \frac{|\delta_i|}{2\lambda}
      \end{equation}
      Therefore,
      \begin{equation}
          \begin{split}
               &|p(\vx)- p(\vx')|  \\&\leq 
                1 - \mathop{\Pi}_{i=1}^d\left(1 - \frac{|\delta_i|}{2\lambda}\right) \\&\leq 
                1- \left( 1 - \sum_{i=1}^{d}\ \frac{|\delta_i|}{2\lambda}\right) \\&=\frac{\|\delta\|_1}{2\lambda}  
          \end{split}
      \end{equation}
\end{itemize}
Thus, by the definition of Lipschitz-continuity,  $p$ is  $1/(2\lambda)$-Lipschitz with respect to the $\ell_1$ norm.
\end{proof}
\setcounter{theorem}{1}
\begin{theorem}[General Case] 
For any $f: \R^d   \rightarrow [0,1]$, and $\lambda> 0$ let $\vs \in [0,2\lambda]^d$ be a random variable,  with a fixed distribution such that:
\begin{equation}
      s_i \sim \gU(0,2\lambda), \,\,\,\, \forall i.
\end{equation}
Note that the components $s_1, ..., s_d$ are \textbf{not} required to be  distributed independently from each other. Then, define:
\begin{align}
\tilde{x}_i &:=
\frac{ \min(2\lambda \ceil{\frac{x_i - s_i}{2\lambda} } + s_i, 1) }{2} \\
&+ \frac{\max(2\lambda \ceil{\frac{x_i - s_i}{2\lambda} -1} + s_i, 0)}{2}\
,\,\,\,\, \forall i \\
p(\vx) &:=\mathop{\E}_{\vs}\left[ f(\tilde{\vx})\right].
\end{align}
Then, $p(.)$ is $1/(2\lambda)$-Lipschitz with respect to the $\ell_1$ norm.
\end{theorem}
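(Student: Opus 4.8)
The plan is to follow the proof of Theorem \ref{thm:main_case_1} essentially verbatim, since the entire argument there---from the union bound in Equation \ref{eq:union_bound} through the conditioning on $\{\tilde{\vx} = \tilde{\vx}'\}$ and the use of $f(\cdot) \in [0,1]$---depends on the splitting construction only through the single per-coordinate fact $\Pr_\vs[\tilde{x}_i \neq \tilde{x}'_i] \leq |\delta_i|/(2\lambda)$. Thus the only genuinely new work is to re-establish this per-coordinate bound for the general definition of $\tilde{x}_i$, after which the remainder of the proof is identical and can be invoked by reference to Theorem \ref{thm:main_case_1}.

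First I would decode the closed-form definition of $\tilde{x}_i$. Writing $k := \ceil{\frac{x_i - s_i}{2\lambda}}$, the inequality $s_i + 2\lambda(k-1) < x_i \leq s_i + 2\lambda k$ shows that $x_i$ lies in the cell $(s_i + 2\lambda(k-1),\, s_i + 2\lambda k]$ of the periodic partition of the line induced by the grid $\{s_i + 2\lambda n : n \in \sN\}$. The first term of the formula is half the upper cell boundary $s_i + 2\lambda k$ clamped to $1$, and---using $\ceil{\frac{x_i-s_i}{2\lambda} - 1} = k-1$---the second term is half the lower boundary $s_i + 2\lambda(k-1)$ clamped to $0$. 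Hence $\tilde{x}_i$ is exactly the midpoint of the (clamped) sub-interval of $[0,1]$ containing $x_i$, matching Figure \ref{fig:split_randomization_small}. Crucially, $\tilde{x}_i$ depends on $x_i$ only through which cell it occupies.

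Next I would bound the per-coordinate disagreement probability. Since $\tilde{x}_i$ is a function of the cell index alone, $\{\tilde{x}_i \neq \tilde{x}'_i\}$ implies that $x_i$ and $x'_i$ lie in different cells, i.e. that some grid point separates them: with $a := \min(x_i, x'_i)$ and $b := \max(x_i, x'_i)$ (so $b - a = |\delta_i|$), this is the event $\{\exists n : a \leq s_i + 2\lambda n < b\}$. Because $s_i \sim \gU(0, 2\lambda)$ is uniform over exactly one period of the grid, the measure of phases $s_i$ placing at least one grid point in $[a,b)$ is $\min(|\delta_i|, 2\lambda)$, giving
\begin{equation*}
\Pr_\vs[\tilde{x}_i \neq \tilde{x}'_i] \leq \frac{\min(|\delta_i|,\, 2\lambda)}{2\lambda} \leq \frac{|\delta_i|}{2\lambda}.
\end{equation*}
Summing over $i$ via the union bound reproduces Equation \ref{eq:union_bound}, and the rest of the argument is copied from Theorem \ref{thm:main_case_1}.

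The main obstacle---and the reason this is stated separately rather than folded into Theorem \ref{thm:main_case_1}---is the regime $|\delta_i| > 2\lambda$, which can occur precisely when $\lambda < 0.5$ (since $|\delta_i|$ may be as large as $1$). There the clean equality $\Pr_\vs[\tilde{x}_i \neq \tilde{x}'_i] = |\delta_i|/(2\lambda)$ of the $\lambda \geq 0.5$ case fails, because the probability saturates at $1$ while $|\delta_i|/(2\lambda)$ exceeds $1$; the fix is to note that only the inequality $\Pr_\vs[\tilde{x}_i \neq \tilde{x}'_i] \leq |\delta_i|/(2\lambda)$ is needed for the union bound, and this still holds. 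A secondary point requiring care is the boundary clamping to $[0,1]$: I would verify that it only ever merges cells lying outside $[0,1]$ (which contain no admissible $x_i$), so it does not affect the inclusion $\{\tilde{x}_i \neq \tilde{x}'_i\} \subseteq \{\text{different cells}\}$ used above.
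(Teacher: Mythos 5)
Your proposal is correct, and it shares the paper's overall skeleton: both arguments reduce the theorem to the per-coordinate bound $\Pr_\vs[\tilde{x}_i \neq \tilde{x}'_i] \leq |\delta_i|/(2\lambda)$ and then reuse the union-bound and conditioning machinery of the $\lambda \geq 0.5$ case (Theorem \ref{thm:main_case_1}) verbatim. The difference lies in how that bound is obtained. The paper first splits globally into $\|\delta\|_1 \geq 2\lambda$ (trivial, since $p(\cdot) \in [0,1]$) and $\|\delta\|_1 < 2\lambda$; the latter forces $|\delta_i| < 2\lambda$ in every coordinate, so that $\ceil{\frac{x_i - s_i}{2\lambda}}$ and $\ceil{\frac{x'_i - s_i}{2\lambda}}$ differ by at most one, and it then runs an explicit two-case analysis (on whether $\ceil{\frac{x_i}{2\lambda}}$ and $\ceil{\frac{x'_i}{2\lambda}}$ are equal or differ by one) to show that the ceilings disagree with probability exactly $|\delta_i|/(2\lambda)$. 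You instead read the formula geometrically---$\tilde{x}_i$ is the midpoint of the clamped cell of a period-$2\lambda$ lattice with random phase $s_i$---and get the bound in one step from phase uniformity, with the saturation regime $|\delta_i| \geq 2\lambda$ absorbed per-coordinate by $\min(|\delta_i|, 2\lambda)$. Consequently you never need the paper's global case split, and your argument makes transparent why the disagreement probability equals $|\delta_i|/(2\lambda)$ below saturation and $1$ above it. What the paper's route buys is a verification that proceeds line-by-line from the defining formula, with no geometric reinterpretation to justify; what yours buys is brevity and a uniform treatment of all $\lambda > 0$ (your decoding step via $\ceil{z-1} = \ceil{z}-1$ is exactly the bridge that legitimizes the picture, and you do supply it). One minor remark: your caution about clamping is vacuous for the direction you actually use---since $\tilde{x}_i$ is a deterministic function of $s_i$ and the cell index, equal cells always yield equal $\tilde{x}_i$ regardless of clamping; clamping could only threaten the converse implication, which your proof never invokes.
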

\begin{proof}

Consider two arbitrary points $\vx, \vx'$ where $\delta:=\vx'-\vx$.  We consider two cases. 
\begin{itemize}
    \item Case 1: $\|\delta\|_1 \geq 2\lambda$:
    Then, because $f(\cdot)\in [0,1]$, and therefore $p(\cdot)\in [0,1]$, we have:
    \begin{equation}
        |p(\vx)- p(\vx')| \leq 1 \leq \frac{\|\delta\|_1}{2\lambda}  
    \end{equation}
    \item Case 2:  $\|\delta\|_1 < 2\lambda$:
    
    In this case, for each $i$, $|\delta_i| < 2\lambda$, and therefore $\ceil{\frac{x_i - s_i}{2\lambda} }$ and $\ceil{\frac{x_i' - s_i}{2\lambda} }$ differ by at most one. Furthermore, $\ceil{\frac{x_i - s_i}{2\lambda} }$ differs from $\ceil{\frac{x_i}{2\lambda} }$ by at most one, and similarly for $x_i'$. Without loss of generality, assume $x_i < x_i'$ (i.e., $\delta_i = |\delta_i| = x_i' - x_i$).
    
    There are two cases:
    \begin{itemize}
        \item Case A: $\ceil{\frac{x_i}{2\lambda}} = \ceil{\frac{x_i'}{2\lambda}}$. Let this integer be $n$. Then:
        \begin{itemize}
            \item $\ceil{\frac{x_i-s_i}{2\lambda}} = \ceil{\frac{x_i'-s_i}{2\lambda}} = n$ iff $\frac{s_i}{2\lambda} < \frac{x_i}{2\lambda} -(n-1)$ (which also implies $\frac{s_i}{2\lambda} < \frac{x'_i}{2\lambda} -(n-1)$).
            \item  $\ceil{\frac{x_i-s_i}{2\lambda}} = \ceil{\frac{x_i'-s_i}{2\lambda}} = n-1$ iff $\frac{s_i}{2\lambda} \geq \frac{x'_i}{2\lambda} -(n-1)$ (which also implies $\frac{s_i}{2\lambda} \geq \frac{x_i}{2\lambda} -(n-1)$).
        \end{itemize}
 Then  $\ceil{\frac{x_i - s_i}{2\lambda} }$ and $\ceil{\frac{x_i' - s_i}{2\lambda} }$ differ only if $\frac{x_i}{2\lambda} -(n-1) \leq \frac{s_i}{2\lambda} < \frac{x_i'}{2\lambda} -(n-1)$, which occurs with probability $\frac{\delta_i}{2\lambda}$.
        
        \item Case B: $\ceil{\frac{x_i}{2\lambda}} + 1 = \ceil{\frac{x_i'}{2\lambda}}$. Let $n := \ceil{\frac{x_i}{2\lambda}}$. Then $\ceil{\frac{x_i-s_i}{2\lambda}}$ and  $\ceil{\frac{x_i'-s_i}{2\lambda}}$  can differ if either:
        \begin{itemize}
            \item $\ceil{\frac{x_i-s_i}{2\lambda}} = n$ and $\ceil{\frac{x_i'-s_i}{2\lambda}} = n+1$. This occurs iff $\frac{s_i}{2\lambda} < \frac{x'_i}{2\lambda} -n$ (which also implies $\frac{s_i}{2\lambda} < \frac{x_i}{2\lambda} -(n-1)$).
            \item $\ceil{\frac{x_i-s_i}{2\lambda}} = n-1$ and $\ceil{\frac{x_i'-s_i}{2\lambda}} = n$. This occurs iff $\frac{s_i}{2\lambda} \geq  \frac{x_i}{2\lambda} -(n-1)$ (which also implies $\frac{s_i}{2\lambda} \geq \frac{x'_i}{2\lambda} -n$).
        \end{itemize}
        In other words, $\ceil{\frac{x_i-s_i}{2\lambda}} = \ceil{\frac{x_i'-s_i}{2\lambda}}$ iff:
        \begin{equation*}
            \frac{x_i}{2\lambda} -(n-1) > \frac{s_i}{2\lambda} \geq \frac{x_i'}{2\lambda} -n
        \end{equation*}
        Or equivalently:
        \begin{equation*}
            \frac{x_i}{2\lambda} -n +1 > \frac{s_i}{2\lambda} \geq \frac{x_i}{2\lambda} -n + \frac{\delta_i}{2\lambda}
        \end{equation*}
        This happens with probability $1-\frac{\delta_i}{2\lambda}$. Therefore, $\ceil{\frac{x_i-s_i}{2\lambda}}$ and  $\ceil{\frac{x_i'-s_i}{2\lambda}}$  differ with probability $\frac{\delta_i}{2\lambda}$.
    \end{itemize}
    Note that $\ceil{\frac{x_i-s_i}{2\lambda} -1}$ and  $\ceil{\frac{x_i'-s_i}{2\lambda}-1}$ differ only when $\ceil{\frac{x_i-s_i}{2\lambda}}$ and  $\ceil{\frac{x_i'-s_i}{2\lambda}}$ differ. Therefore in both cases, $\tilde{x}_i$ and  $\tilde{x}_i'$ differ with probability at most $\frac{|\delta_i|}{2\lambda}$. The rest of the proof proceeds as in the $\lambda \geq 0.5$ case in the main text.
    \end{itemize}
\end{proof}
\setcounter{corollary}{0}
\begin{corollary}[General Case] 
For any $f: \R^d   \rightarrow [0,1]$, and $\lambda \geq 0$ (with $2\lambda$ a multiple of $1/q$), let $\vs \in \left[0,2\lambda-1/q\right]_{(q)}^d + \mathbb{1}/(2q)$ be a random variable with a fixed distribution such that:
\begin{equation}
      s_i \sim \gU_{(q)}\left(0,2\lambda-1/q\right) + 1/(2q), \,\,\,\, \forall i.
\end{equation}
Note that the components $s_1, ..., s_d$ are \textbf{not} required to be  distributed independently from each other. Then, define:
\begin{align}
\tilde{\bx}_i &:=
\frac{ \min(2\lambda \ceil{\frac{\bx_i - s_i}{2\lambda} } + s_i, 1) }{2} \\
&+ \frac{\max(2\lambda \ceil{\frac{\bx_i - s_i}{2\lambda} -1} + s_i, 0)}{2}\
,\,\,\,\, \forall i \\
p(\bx) &:=\mathop{\E}_{\vs}\left[ f(\tilde{\bx})\right].
\end{align}
Then, $p(.)$ is $1/(2\lambda)$-Lipschitz with respect to the $\ell_1$ norm on the quantized domain $\bx \in [0,1]^d_{(q)}$.
\end{corollary}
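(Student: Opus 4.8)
The plan is to reproduce the per-coordinate probability identity $\Pr_{\vs}[\tilde{\bx}_i \neq \tilde{\bx}'_i] = |\delta_i|/(2\lambda)$ that drives all the earlier proofs, but now combining the ceiling-function case analysis of the continuous general case (Theorem \ref{thm:main_case_2}) with the discrete counting argument of the quantized $\lambda \geq 0.5$ corollary (Corollary \ref{thm:main_case_1_quantized}). Once this identity (or the one-sided inequality) is in hand, the union bound over coordinates and the conditional-expectation decomposition carry over verbatim, exactly as in Theorem \ref{thm:main_case_1}.

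First I would fix two arbitrary quantized points $\bx, \bx'$ and argue coordinate-by-coordinate. Assume without loss of generality $\bx_i < \bx'_i$, so that $\delta_i = |\delta_i|$ is a positive multiple of $1/q$. Following the continuous general-case argument, $\tilde{\bx}_i \neq \tilde{\bx}'_i$ occurs precisely when $\bx_i$ and $\bx'_i$ land in different sub-intervals, which is equivalent to the ceilings $\ceil{(\bx_i - s_i)/(2\lambda)}$ and $\ceil{(\bx'_i - s_i)/(2\lambda)}$ differing. The continuous analysis of Theorem \ref{thm:main_case_2} shows that the set of $s_i$ triggering this event has total Lebesgue measure $\delta_i$ within one period of length $2\lambda$.

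The key new step is to replace this measure computation with a count. Here I would use that $s_i$ ranges uniformly over $2\lambda q$ equally-spaced values, offset onto half-steps by $1/(2q)$, and that $\delta_i$ is itself a multiple of $1/q$. The set of $s_i$ that separates $\bx_i$ from $\bx'_i$ then contains exactly $q\,|\delta_i|$ of these discrete values, yielding $\Pr_{\vs}[\tilde{\bx}_i \neq \tilde{\bx}'_i] \leq |\delta_i|/(2\lambda)$, in direct analogy with the counting step of Corollary \ref{thm:main_case_1_quantized}.

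The main obstacle — and the reason the definition of $\vs$ places the values on half-steps — is boundary bookkeeping: the ceiling regions in the general case are delimited by a mixture of strict and non-strict inequalities, so one must ensure that no admissible value of $s_i$ lands exactly on a threshold $\bx_i$ or $\bx'_i$ (which would make the count ambiguous), and that the truncations $\min(\cdot,1)$ and $\max(\cdot,0)$ do not spuriously merge two genuinely distinct sub-intervals. The half-step offset guarantees $s_i$ never coincides with a quantization level, keeping the count clean; and because any merging caused by truncation can only \emph{decrease} the probability, the inequality $\Pr_{\vs}[\tilde{\bx}_i \neq \tilde{\bx}'_i] \leq |\delta_i|/(2\lambda)$ suffices. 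Summing over $i$ via the union bound gives $\Pr_{\vs}[\tilde{\bx} \neq \tilde{\bx}'] \leq \|\delta\|_1/(2\lambda)$, and then $\E_{\vs}[f(\tilde{\bx}) - f(\tilde{\bx}') \mid \tilde{\bx} = \tilde{\bx}'] = 0$ together with $f(\cdot) \in [0,1]$ yields $|p(\bx) - p(\bx')| \leq \|\delta\|_1/(2\lambda)$, establishing the claimed Lipschitz bound.
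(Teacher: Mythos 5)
Your proposal is correct and takes essentially the same route as the paper's own proof: the paper likewise reuses the ceiling-function case analysis from the continuous general case, replaces the Lebesgue-measure computation with a count of exactly $q\,|\delta_i|$ admissible half-step values of $s_i$ out of the $2\lambda q$ possible ones (the half-step offset making all boundary inequalities strict, exactly as you observe), notes that the $\min/\max$ truncations and the $\ceil{\cdot - 1}$ term can only make $\tilde{\bx}_i = \tilde{\bx}'_i$ more likely, and then finishes with the union bound and conditional-expectation decomposition of Theorem 1. The only cosmetic difference is that the paper disposes of the regime $\|\delta\|_1 \geq 2\lambda$ via a separate trivial case before invoking the two-case ceiling analysis (which requires $|\delta_i| < 2\lambda$), whereas in your write-up this should be noted explicitly: when $|\delta_i| \geq 2\lambda$ your one-sided bound $\Pr_{\vs}[\tilde{\bx}_i \neq \tilde{\bx}'_i] \leq |\delta_i|/(2\lambda)$ holds vacuously since the right-hand side is at least $1$.
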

\begin{proof}
The proof is substantially similar to the proof of the continuous case above. Minor differences occur in Cases 2.A and 2.B (mostly due to inequalities becoming strict, because possible values of $s_i$ are offset from values of $\bx_i$) which we show here:
    \begin{itemize}
        \item Case A: $\ceil{\frac{\bx_i}{2\lambda}} = \ceil{\frac{\bx_i'}{2\lambda}}$. Let this integer be $n$. Then:
        \begin{itemize}
            \item $\ceil{\frac{\bx_i-s_i}{2\lambda}} = \ceil{\frac{\bx_i'-s_i}{2\lambda}} = n$ iff $\frac{s_i}{2\lambda} < \frac{\bx_i}{2\lambda} -(n-1)$ (which also implies $\frac{s_i}{2\lambda} < \frac{\bx'_i}{2\lambda} -(n-1)$).
            \item  $\ceil{\frac{\bx_i-s_i}{2\lambda}} = \ceil{\frac{\bx_i'-s_i}{2\lambda}} = n-1$ iff $\frac{s_i}{2\lambda} > \frac{\bx'_i}{2\lambda} -(n-1)$ (which also implies $\frac{s_i}{2\lambda} > \frac{\bx_i}{2\lambda} -(n-1)$).
        \end{itemize}
 Then  $\ceil{\frac{\bx_i - s_i}{2\lambda} }$ and $\ceil{\frac{\bx_i' - s_i}{2\lambda} }$ differ only if $\frac{\bx_i}{2\lambda} -(n-1) < \frac{s_i}{2\lambda} < \frac{\bx_i'}{2\lambda} -(n-1)$.
 There are exactly $q\cdot \delta_i$ discrete values that $s_i$ can take such that this condition holds. This is out of $2\lambda q$ possible values over which $s_i$ is uniformly distributed. Therefore, the condition holds with probability $\frac{\delta_i}{2\lambda}$.
        
        \item Case B: $\ceil{\frac{\bx_i}{2\lambda}} + 1 = \ceil{\frac{\bx_i'}{2\lambda}}$. Let $n := \ceil{\frac{\bx_i}{2\lambda}}$. Then $\ceil{\frac{\bx_i-s_i}{2\lambda}}$ and  $\ceil{\frac{\bx_i'-s_i}{2\lambda}}$  can differ if either:
        \begin{itemize}
            \item $\ceil{\frac{\bx_i-s_i}{2\lambda}} = n$ and $\ceil{\frac{\bx_i'-s_i}{2\lambda}} = n+1$. This occurs iff $\frac{s_i}{2\lambda} < \frac{\bx'_i}{2\lambda} -n$ (which also implies $\frac{s_i}{2\lambda} < \frac{\bx_i}{2\lambda} -(n-1)$).
            \item $\ceil{\frac{\bx_i-s_i}{2\lambda}} = n-1$ and $\ceil{\frac{\bx_i'-s_i}{2\lambda}} = n$. This occurs iff $\frac{s_i}{2\lambda} >  \frac{\bx_i}{2\lambda} -(n-1)$ (which also implies $\frac{s_i}{2\lambda} > \frac{\bx'_i}{2\lambda} -n$).
        \end{itemize}
        In other words, $\ceil{\frac{\bx_i-s_i}{2\lambda}} = \ceil{\frac{\bx_i'-s_i}{2\lambda}}$ iff:
        \begin{equation*}
            \frac{\bx_i}{2\lambda} -(n-1) > \frac{s_i}{2\lambda} > \frac{\bx_i'}{2\lambda} -n
        \end{equation*}
        Or equivalently:
        \begin{equation*}
            \frac{\bx_i}{2\lambda} -n +1 > \frac{s_i}{2\lambda} > \frac{\bx_i}{2\lambda} -n + \frac{\delta_i}{2\lambda}
        \end{equation*}
         There are exactly $q\cdot (1-\delta_i)$ discrete values that $s_i$ can take such that this condition holds. This is out of $2\lambda q$ possible values over which $s_i$ is uniformly distributed. Therefore, the condition holds with probability  $\frac{1-\delta_i}{2\lambda}$. Thus, $\ceil{\frac{\bx_i-s_i}{2\lambda}}$ and  $\ceil{\frac{\bx_i'-s_i}{2\lambda}}$ differ with probability $\frac{\delta_i}{2\lambda}$.
    \end{itemize}
\end{proof}
\section{Experimental Details} \label{sec:exp_details}
For uniform additive noise, we reproduced \citet{pmlr-v119-yang20c}'s results directly, using their released code. Note that we also reproduced the training of all models, rather than using released models. For Independent SSN and DSSN, we followed the same training procedure as in \citet{pmlr-v119-yang20c}, but instead used the noise distribution of our methods during training. For DSSN, we used the same vector $\vv$ to generate noise during training and test time: note that our certificate requires  $\vv$ to be the same fixed vector whenever the classifier is used. In particular, we used a pseudorandom array generated using the Mersenne Twister algorithm with seed 0, as implemented in NumPy as numpy.random.RandomState. This is guaranteed to produce identical results on all platforms and for all future versions of NumPy, given the same seed, so in practice we only store the seed (0). In Section \ref{sec:seed}, we explore the sensitivity of our method to different choices of pseudorandom seeds.

In a slight deviation from \citet{pmlr-v97-cohen19c}, \citet{pmlr-v119-yang20c} uses different noise vectors for each sample in a batch when training (\citet{pmlr-v97-cohen19c} uses the same $\epsilon$ for all samples in a training batch to improve speed). We follow \citet{pmlr-v119-yang20c}'s method: this means that when training DSSN, we train the classifier on each sample only once per epoch, with a single, randomly-chosen value of $s_\text{base}$, which varies between samples in a batch.

Training parameters (taken from \citet{pmlr-v119-yang20c}) were as follows (Table \ref{tab:training_params}):
\begin{table}[h]
    \centering
    \begin{tabular}{|c|c|c|}
    \hline
        & CIFAR-10& ImageNet\\
        \hline
       Architecture& WideResNet-40& ResNet-50  \\
               \hline
        Number of Epochs& 120&  30\\
                \hline
        Batch Size& 64 \footnotemark& 64  \\
                \hline
        Initial &0.1& 0.1  \\
         Learning Rate&&  \\
                \hline
        LR Scheduler &Cosine &Cosine   \\
         & Annealing& Annealing  \\
    \hline
    \end{tabular}
    \caption{Training parameters for experiments.}
    \label{tab:training_params}
\end{table}
\footnotetext{There is a discrepancy between the code and the text of \citet{pmlr-v119-yang20c} about the batch size used for training on CIFAR-10: the paper says to use a batch size of 128, while the instructions for reproducing the paper's results released with the code use a batch size of 64. Additionally, inspection of one of \citet{pmlr-v119-yang20c}'s released models indicates that a batch size of 64 was in fact used. (In particular, the ``num\_batches\_tracked'' field in the saved model, which counts the total number of batches used in training, corresponded with a batch size of 64.) We therefore used a batch size of 64 in our reproduction, assuming that the discrepancy was a result of a typo in that paper.} 

For all training and certification results in the main text, we used a single NVIDIA 2080 Ti GPU. (Some experiments with denoisers, in Section \ref{sec:denoise}, used two GPUs.)

For testing, we used the entire CIFAR-10 test set (10,000 images) and a subset of 500 images of ImageNet (the same subset used by \citet{pmlr-v97-cohen19c}). 

When reporting clean accuracies for randomized techniques (uniform additive noise and Independent SSN), we followed \cite{pmlr-v119-yang20c} by simply reporting the percent of samples for which the $N_0 = 64$ initial noise perturbations, used to pick the top class during certification, actually selected the correct class. (Notably, \cite{pmlr-v119-yang20c} does not use an ``abstain'' option for prediction, as some other randomized smoothing works \cite{pmlr-v97-cohen19c} do.) On the one hand, this is an inexact estimate of the accuracy of the \textit{true} classifier $p(\vx)$, which uses the true expectation. On the other hand, it is the actual, empirical accuracy of a classifier that is being used in practice. This is not an issue when reporting the clean accuracy for DSSN, which is exact.

In DSSN, following \citet{DBLP:conf/nips/0001F20a} (discussed in Section \ref{sec:prior_derandomized}), if two classes tie in the number of ``votes'', we predict the first class lexicographically: this means that we can certify robustness up to \textit{and including} the radius $\rho$, because we are guaranteed consistent behavior in the case of ties. Reported certified radii for DSSN should therefore be interpreted to guarantee robustness even in the $\|\bx-\bx'\|_1 = \rho$ case. (This is not a meaningful distinction in randomized methods where the space is taken as continuous).

\begin{table*}[ht]
\begin{tabular}{|c|l|l|l|l|l|l|l|l|} 
\hline
&$\rho =$ 0.5&$\rho =$ 1.0&$\rho =$ 1.5&$\rho =$ 2.0&$\rho =$ 2.5&$\rho =$ 3.0&$\rho =$ 3.5&$\rho =$ 4.0\\
\hline
Seed = 0&72.25\%&63.07\%&56.21\%&51.33\%&46.76\%&42.66\%&38.26\%&33.64\%\\
&(81.50\%&(77.85\%&(71.17\%&(67.98\%&(65.40\%&(65.40\%&(65.40\%&(65.40\%\\
&@ $\sigma$=0.75)&@ $\sigma$=1.25)&@ $\sigma$=2.25)&@ $\sigma$=3.0)&@ $\sigma$=3.5)&@ $\sigma$=3.5)&@ $\sigma$=3.5)&@ $\sigma$=3.5)\\
\hline
Seed = 1&72.01\%&62.73\%&56.03\%&51.20\%&46.71\%&42.45\%&37.87\%&33.08\%\\
&(81.85\%&(75.64\%&(72.19\%&(67.65\%&(66.93\%&(66.19\%&(66.19\%&(66.19\%\\
&@ $\sigma$=0.75)&@ $\sigma$=1.5)&@ $\sigma$=2.0)&@ $\sigma$=3.0)&@ $\sigma$=3.25)&@ $\sigma$=3.5)&@ $\sigma$=3.5)&@ $\sigma$=3.5)\\
\hline
Seed = 2&72.62\%&62.79\%&56.06\%&51.02\%&46.85\%&42.52\%&38.22\%&33.53\%\\
&(81.19\%&(74.26\%&(70.13\%&(70.13\%&(65.33\%&(65.33\%&(65.33\%&(65.33\%\\
&@ $\sigma$=0.75)&@ $\sigma$=1.75)&@ $\sigma$=2.5)&@ $\sigma$=2.5)&@ $\sigma$=3.5)&@ $\sigma$=3.5)&@ $\sigma$=3.5)&@ $\sigma$=3.5)\\
\hline
\end{tabular}
\caption{Comparison of DSSN using different random seeds to generate $\vv$  on CIFAR-10. Matching \citet{pmlr-v119-yang20c}, we test on 15 noise levels ($\sigma \in \{0.15, 0.25n  \text{ for }1 \leq n \leq 14\}$). We report the best certified accuracy at a selection of radii $\rho$, as well as the clean accuracy and noise level of the associated classifier. We find very little difference between the different seed values, with all certified accuracies within $\pm 0.65$ percentage points of each other.}
\label{tab:seed}
\end{table*}
\section{Effect of pseudorandom choice of $\vv$} \label{sec:seed}

In Section \ref{sec:exp_details}, we mention that the vector $\vv$ used in the derandomization of DSSN, which must be re-used every time the classifier is used, is generated pseudorandomly, using a seed of 0 in all experiments. In this section, we explore the sensitivity of our results to the choice of vector $\vv$, and in particular to the choice of random seed. To do this, we repeated all standard-training DSSN experiments on CIFAR-10, using two additional choices of random seeds. We performed both training and certification using the assigned $\vv$ vector for each experiment. Result are summarized in Table \ref{tab:seed}. We report a tabular summary, rather than certification curves, because the curves are too similar to distinguish. In general, the choice of random seed to select $\vv$ does not seem to impact the certified accuracies: all best certified accuracies were within $0.65$ percentage points of each other. This suggests that our method is robust to the choice of this hyperparameter.
\section{Effect of a Denoiser} \label{sec:denoise}

As shown in Figure \ref{fig:random_derandom} in the main text, at large $\lambda$, there is a substantial benefit to SSN which is unrelated to derandomization, due to the differences in noise distributions discussed in Section 4.2.1. However, Equation \ref{eq:noise_mapping} shows that the difference between uniform additive noise and Independent SSN is a simple, deterministic transformation on each pixel. We therefore wondered whether training a denoiser network, to learn the relationship between $\vx$ and the noisy sample ($\vx + \epsilon$ or $\tilde{{\vx}}$), would eliminate the differences between the methods. 
\citet{salman2020denoised} proposes methods of training denoisers for randomized smoothing, in the context of using smoothing on pre-trained classifiers. In this context, the noisy image first passes through a denoiser network, before being passed into a classification network trained on clean images.  We used their code (and all default parameters), in three variations:
\begin{enumerate}
    \item \textbf{Stability Denoising}:  In this method, the pre-trained classifier network is required for training the denoiser. The loss when training the denoiser is based on the consistency between the logit outputs of the classifier on the clean input $\vx$ and on the denoised version of the noisy input. This is the best-performing method in \cite{salman2020denoised}. However, note that it does not directly use the pixel values of $\vx$ when training the denoiser, and therefore might not ``learn'' the correspondence between clean and noisy samples (Figure \ref{fig:compare_representations} in the main text) as easily.
    \item \textbf{MSE Denoising}: This trains the denoiser via direct supervised training, with the objective of reducing the mean squared error difference between the pixel values of the clean and denoised samples. Then, classification is done using a classifier that is pre-trained only on clean samples. This performs relatively poorly in \cite{salman2020denoised}, but should directly learn the correspondence between clean and noisy samples.
     \item \textbf{MSE Denoising with Retraining}: For this experiment, we trained an MSE denoiser as above, but \textit{then} trained the entire classification pipeline (the denoiser $+$ the classifier) on noisy samples. Note that the classifier is trained from scratch in this case, with the pre-trained denoiser already in place (but being fine-tuned as the classifier is trained).
\end{enumerate}
We tested on CIFAR-10, at three different noise levels, without stability training. See Figure \ref{fig:denoiser} for results. Overall, we find that at high noise, there is still a significant gap in performance between Independent SSN and \cite{pmlr-v119-yang20c}'s method, using all of the denoising techniques. One possible explanation is that it is also more difficult \textit{for the denoiser} to learn the noise distribution of  \cite{pmlr-v119-yang20c}, compared to our distributions. 

\section{Additive and splitting noise allow for different types of joint noise distributions}
In Section 4.2 in the main text, we showed that, in the $\lambda=0.5$ case, SSN leads to marginal distributions which are simple affine transformations of the marginal distributions of the uniform additive smoothing noise (Equation \ref{eq:equivalence_lambda_half}).  However, we also showed (Proposition  \ref{prop:uniform_broken}) that, even in this case, certification is not possible using arbitrary joint distributions of $\epsilon$ with uniform additive noise, as it is with SSN. This difference is explained by the fact that, even for $\lambda = 0.5$, the joint distributions of $(\vx + \epsilon)$ which can be generated by uniform additive noise and the joint distributions of $\tilde{\vx}$ which can be generated by SSN respectively are in fact quite different.

To quantify this, consider a pair of two joint distributions: $\gD$, with marginals uniform on $[-0.5, 0.5]$, and $\gS$, with marginals uniform on $[0, 1]$. Let $\gD$ and $\gS$ be considered \textit{equivalent}  if, for $\epsilon \sim \gD$ and $\vs \sim \gS$:
\begin{equation}
 \tilde{\vx}  \sim (1/2) (\vx + \epsilon) + \mathbb{1}/4 \,\,\,\,\,\,\forall \vx
\end{equation}
where $\tilde{\vx}$ is generated using the SSN noise $\vs$ (compare to Equation \ref{eq:equivalence_lambda_half} in the main text).
\begin{proposition}
The only pair of equivalent joint distributions $(\gD,\gS)$ is $\gD \sim \gU^d(-0.5,0.5)$,  $\gS \sim \gU^d(0,1)$.
\end{proposition}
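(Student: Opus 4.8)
The plan is to recast the equivalence condition as a translation-invariance statement on the torus $\mathbb{T}^d := (\mathbb{R}/\mathbb{Z})^d$ and then invoke the uniqueness of the uniform (Haar) measure. First I would unpack the per-coordinate form of the SSN map in the $\lambda = 0.5$ case: since $s_i \sim \gU(0,1)$ we have $\min(s_i,1) = s_i$ almost surely, so $\tilde{x}_i = (s_i + \1_{x_i > s_i})/2$. Writing $\vu := \epsilon + \mathbbm{1}/2$, whose law I call $\gT$ (with marginals $\gU(0,1)$), the defining equivalence $\tilde{\vx} \stackrel{d}{=} \tfrac12(\vx+\epsilon) + \mathbbm{1}/4$, multiplied through by $2$ and with the deterministic vector $\vx$ subtracted coordinatewise, becomes
\begin{equation}
  \big( (s_i - x_i) \bmod 1 \big)_{i=1}^d \;\stackrel{d}{=}\; \vu, \qquad \forall \vx \in [0,1]^d,
\end{equation}
where I have used the identity $s_i + \1_{x_i > s_i} - x_i = (s_i - x_i)\bmod 1$, which follows from a short case check on whether $s_i \geq x_i$ (the boundary cases $s_i=1$, $x_i\in\{0,1\}$ have measure zero).

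Next I would read off two consequences from this single identity. Setting $\vx = \mathbf{0}$ gives $\vs \stackrel{d}{=} \vu$, i.e. $\gS = \gT$, so the condition collapses to
\begin{equation}
  \big( (s_i - x_i) \bmod 1 \big)_{i=1}^d \;\stackrel{d}{=}\; \vs, \qquad \forall \vx \in [0,1]^d.
\end{equation}
Viewing $\gS$ as a probability measure on $\mathbb{T}^d$ (legitimate since $\gS$ is supported on $[0,1]^d$), this says exactly that $\gS$ is invariant under translation by $-\vx$ for every $\vx \in [0,1]^d$; as these shifts range over all of $\mathbb{T}^d$, the measure $\gS$ is translation-invariant on the compact group $\mathbb{T}^d$.

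The crux is then to show that a translation-invariant probability measure on $\mathbb{T}^d$ must be uniform. I would do this with Fourier coefficients: for $\vk \in \mathbb{Z}^d$, translation by $\vt$ multiplies $\hat{\gS}(\vk) = \E[e^{-2\pi i \vk \cdot \vs}]$ by $e^{-2\pi i \vk \cdot \vt}$, so invariance for all $\vt$ forces $\hat{\gS}(\vk) = 0$ for every $\vk \neq \mathbf{0}$ (choose $\vt$ with $\vk \cdot \vt \notin \mathbb{Z}$, possible since $\vk \neq \mathbf{0}$). Hence $\gS$ shares all Fourier coefficients with the uniform measure and therefore equals it, i.e. $\gS = \gU^d(0,1)$. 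Since $\gT = \gS$, we get $\gT = \gU^d(0,1)$, so $\gD = \gT - \mathbbm{1}/2 = \gU^d(-0.5,0.5)$. The converse is then immediate: when $\gS = \gU^d(0,1)$, translation-invariance of Lebesgue measure gives $((s_i - x_i)\bmod 1)_i \sim \gU^d(0,1) = \gT$ for every $\vx$, so the independent-uniform pair is genuinely equivalent, confirming it as the unique one.

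The main obstacle I anticipate is not depth but care: justifying the passage to the torus cleanly (handling the measure-zero boundary and verifying that $s_i + \1_{x_i>s_i} - x_i$ truly realizes the group operation $(s_i-x_i)\bmod 1$), and stating the uniformity step at the right level of rigor. The Fourier argument is attractive here precisely because it avoids invoking an external Haar-uniqueness theorem and keeps the proof self-contained.
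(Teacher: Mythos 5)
Your proof is correct, and it reaches the paper's conclusion by a genuinely different route. The paper's own argument is geometric: it fixes a noised output $\tilde{\vx}'$, notes that a \emph{unique} splitting vector produces it regardless of which input generated it, so that under any joint $\gS$ the chance of producing $\tilde{\vx}'$ is constant on the set $\gX'$ of inputs that can reach it; since $\gX'$ is exactly the $\ell_\infty$ ball of radius $0.5$ around the corresponding additive-noise point $2\tilde{\vx}'-\mathbbm{1}/2$ (cf.\ Figure \ref{fig:compare_representations}), equivalence forces $\gD$ to put equal weight on every displacement in $[-0.5,0.5]^d$, hence to be uniform. You instead recast the problem algebraically: the recentred SSN output satisfies $2\tilde{x}_i-x_i=(s_i-x_i)\bmod 1$ almost surely, so after the $\vx=\vzero$ normalization (which gives $\gS=\gT$) the equivalence condition says precisely that the law of $\vs$, viewed on the torus $(\R/\sZ)^d$, is invariant under every translation, and you conclude uniformity by showing every nonzero-frequency Fourier coefficient must vanish. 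Both proofs ultimately rest on the same structural fact---SSN noise acts as a group translation mod $1$---but your version buys rigor and generality: the paper's phrase ``the probability of generating $\tilde{\vx}'$'' is delicate for non-atomic laws (it is pointwise zero, and a general joint $\gD$ need not even admit a density), whereas your argument operates purely at the level of distributions, needs no regularity assumptions, establishes the uniqueness of both $\gS$ and $\gD$ together with the converse direction in one sweep, and stays self-contained by deriving Haar uniqueness from characters rather than citing it. What the paper's version buys in return is geometric transparency: it shows directly why the $\ell_\infty$ ball appears and why independent uniform noise is the only additive scheme compatible with any splitting scheme.
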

\begin{proof}
We first describe a special property of SSN (with $\lambda$ = 0.5):

Fix a smoothed value $\tilde{\vx}'$, and let $\gX'$ be the set of all inputs $\vx$ such that $\tilde{\vx}'$ can be generated from $\vx$ under \textit{any} joint splitting distribution $\gS$.
From Figure \ref{fig:compare_representations}-a in the main text, we can see that this is simply
\begin{equation} \label{eq:apdx_ball}
\gX' = \{\vx | \tilde{x}'_i \leq x_i/2 + (1/2)  \leq \tilde{x}'_i  + (1/2)  \,\,\,\, \forall i\}. 
\end{equation}

Notice that to generate $\tilde{\vx}'$, \textit{regardless of the value of $\vx \in \gX'$}, the splitting vector $\vs$ must be exactly the following:

\begin{equation}
    s_i = \begin{cases}
   2\tilde{x}_i' &\text{     if  }  \tilde{x}_i' < 1/2\\
    2\tilde{x}_i' -1 &\text{     if  }  \tilde{x}_i' \geq 1/2\\
    \end{cases}
\end{equation}
(This is made clear by Figure  \ref{fig:split_randomization_1} in the main text.)

If $\vx \in \gX'$, then $\tilde{\vx}'$ will be generated iff this value of $\vs$ is chosen. Therefore, given a fixed splitting distribution $\gS$, the probability of generating $\tilde{\vx}'$ must be \textit{constant} for all points in $\gX'$.

Now, we compare to uniform additive noise. In order for $\gD$ and $\gS$ to be equivalent, for the fixed noised point $(\vx+\epsilon)' = 2\tilde{\vx'} - \mathbb{1}/2$, it must be the case that all points in $\gX'$ are equally likely to generate $(\vx+\epsilon)'$. But note from Equation \ref{eq:apdx_ball} that $\gX'$ is simply the uniform $\ell_\infty$ ball of radius 0.5 around $(\vx+\epsilon)'$. This implies that $\gD$ must be the uniform distribution $\gD \sim \gU^d(-0.5,0.5)$, which is equivalent to the splitting distribution $\gS \sim \gU^d(0,1)$.
\end{proof}
The \textit{only} case when SSN and uniform additive noise can produce similar distributions of noisy samples is when all noise components are independent. This helps us understand how SSN can work with \textit{any} joint distribution of splitting noise, while uniform additive noise has only been shown to produce accurate certificates when all components of $\epsilon$ are independent. 
\section{Tightness of Theorem \ref{thm:main_case_2}}
Here, we discuss the tightness of our certification result. Theorem \ref{thm:main_case_2} is tight in the following sense:
\begin{proposition} \label{thm:prop_tighnness_1}
For any $\lambda> 0$ and a random variable $\vs \in [0,2\lambda]^d$,  with any fixed distribution such that:
\begin{equation}
      s_i \sim \gU(0,2\lambda), \,\,\,\, \forall i,
\end{equation}
there exists a $f: \R^d   \rightarrow [0,1]$, such that if we define:
\begin{align}
\tilde{x}_i &:=
\frac{ \min(2\lambda \ceil{\frac{x_i - s_i}{2\lambda} } + s_i, 1) }{2} \\
&+ \frac{\max(2\lambda \ceil{\frac{x_i - s_i}{2\lambda} -1} + s_i, 0)}{2}\
,\,\,\,\, \forall i \\
p(\vx) &:=\mathop{\E}_{\vs}\left[ f(\tilde{\vx})\right].
\end{align}
then, $p(.)$ is not $c$-Lipschitz with respect to the $\ell_1$ norm for any $c < 1/(2\lambda)$.
\end{proposition}
In other words, we cannot make the Lipschitz constraint any tighter without some base classifier $f$ providing a counterexample. Note that this result holds for any legal choice of joint distribution of $\vs$.
\begin{proof}
We consider two cases, on $\lambda$:
\begin{itemize}
    \item \textbf{Case 1: $\lambda < 0.5$:} Consider the following base classifier:
    \begin{equation}
        f(\tilde{\vx}) = \begin{cases}
        1& \text{     if  } \tilde{x}_1 > \lambda\\\
        0& \text{     otherwise.}\end{cases}
    \end{equation}
Now, consider the points $\vx = [0,0,0,0,...]$ and $\vx' = [2\lambda,0,0,0,...]$. Note that $\|\vx' - \vx\|_1 = 2\lambda$. From the definition of $\tilde{\bv}$, we have (with probability 1):
\begin{equation*}
\begin{split}
        \tilde{x}_1 &= \frac{s_1}{2}\\
    \tilde{x}_1' &= \min\Big(\lambda +\frac{s_1}{2},\frac{1}{2}\Big) + \frac{s_1}{2}
\end{split}
\end{equation*}
Note that this means that, with probability 1, we have $\tilde{x}_1 \leq  \lambda$, and therefore $ f(\tilde{\vx}) = 0$. Similarly, with probability 1, $\tilde{x}'_1 > \lambda$, so  $f(\tilde{\vx}') = 1$. Then, for all $c < 1/(2\lambda)$:
\begin{equation*}
    \begin{split}
        |p(\vx) - p(\vx')| &= \\
          |\mathop{\E}_{\vs}\left[ f(\tilde{\vx})\right] - \mathop{\E}_{\vs}\left[ f(\tilde{\vx}')\right]| &= \\
          | 0 -1 | &= 1 > c \cdot 2\lambda =  c\|\vx' - \vx\|_1
    \end{split}
\end{equation*}
    So $p(\cdot)$ is not $c$-Lipschitz.
 \item \textbf{Case 2: $\lambda \geq 0.5$:} Consider the following base classifier:
    \begin{equation}
        f(\tilde{\vx}) = \begin{cases}
        1& \text{     if  } \tilde{x}_1 > 0.5\\\
        0& \text{     otherwise.}\end{cases}
    \end{equation}
Now, consider the points $\vx = [0,0,0,0,...]$ and $\vx' = [1,0,0,0,...]$. Note that $\|\vx' - \vx\|_1 = 1$. From the definition of $\tilde{\bx}$, we have (with probability 1):
\begin{equation*}
\begin{split}
        \tilde{x}_1 &= \frac{\min(s_1,1)}{2}\\
    \tilde{x}_1' &= \frac{\min{(s_i,1)} + \1_{s_i< 1}}{2}
\end{split}
\end{equation*}
Note that this means that, with probability 1, we have $\tilde{x}_1 \leq 0.5$, and therefore $ f(\tilde{\vx}) = 0$. On the other hand, $\tilde{x}'_1 > 0.5$ iff $s \in (0,1)$ which occurs with probability $1/(2\lambda)$. Therefore, for all $c < 1/(2\lambda)$:
\begin{equation*}
    \begin{split}
        |p(\vx) - p(\vx')| &= \\
          |\mathop{\E}_{\vs}\left[ f(\tilde{\vx})\right] - \mathop{\E}_{\vs}\left[ f(\tilde{\vx}')\right]| &= \\
          | 0 -1/(2\lambda) | &= 1/(2\lambda) > c =  c\|\vx' - \vx\|_1
    \end{split}
\end{equation*}
    So $p(\cdot)$ is not $c$-Lipschitz.
\end{itemize}
\end{proof}
However, the tightness of the global Lipschitz bound on $p$ does not imply that the final \textit{certificate} result, on the minimum possible distance from $\vx$ to the decision boundary given $p(\vx)$, is necessarily a tight bound.

For simplicity, consider a binary classifier, so that the decision boundary is at $p(\vx) = 0.5$.  The certificate given by our method can be formalized as a function 
\begin{equation}
    \text{cert}(z) := 2\lambda(z-0.5),
\end{equation} which maps the value of $p(\vx)$ to the certified lower bound on the distance to the decision boundary.

A certificate function can be  considered tight if, for all $z \in (0.5,1.0]$ there exists an $f, \vx ,\vx'$ such that:
\begin{equation}
\begin{split}
    p(\vx) &= z \\ \|\vx-\vx'\|_1 &= \text{cert}(z)\\
    p(\vx') &= 0.5
\end{split}
\end{equation}
Note that, for example, the well-known smoothing-based $\ell_2$ robustness certificate proposed by \cite{pmlr-v97-cohen19c} is tight by the analogous definition.

It turns out that our certificate function is not necessarily tight by this definition. In particular, one can show for some valid choice of $\lambda$ and joint distribution of $\vs$ that this definition of tightness does not hold.

For example, consider the case where $s_1=s_2=...=s_d$, and $\lambda > 1$. We discuss this scenario briefly in Section \ref{sec:DSSN}\footnote{In that section, we discuss this distribution in the quantized case, but the  differences is not relevant to our argument here}; recall (Equation \ref{eq:all_s_equal}) that the smoothed classifier must take the form:

\begin{equation}
    p(\vx) = \frac{2\lambda -1}{2\lambda} f(0.5 \cdot \mathbb{1}) +\frac{1}{2\lambda}\mathop{\E}_{\vs < 1}\left[ f(\tilde{\vx})\right]  \\,\,\,\,\,\forall \vx, 
\end{equation}
which is the sum of a constant, and a function bounded in $[0,1/(2\lambda)]$. If $z > 0.5 + 1/(2\lambda)$, this implies that $\frac{2\lambda -1}{2\lambda} f(0.5 \cdot \mathbb{1}) > 0.5$, which implies that $p(\cdot ) > 0.5$ everywhere. This means that the tightness condition cannot hold for $z \in (0.5+1/(2\lambda), 1]$.
\section{Complete Certification Data on CIFAR-10 and ImageNet}
We provide complete certification results for uniform additive noise, randomized SSN with independent noise, and DSSN, at all tested noise levels on both CIFAR-10 and ImageNet,  using both standard and stability training. For CIFAR-10, see Figures \ref{fig:appendix_cifar_0}, \ref{fig:appendix_cifar_1}, \ref{fig:appendix_cifar_2}, and \ref{fig:appendix_cifar_3}. For ImageNet, see Figure \ref{fig:appendix_imagenet_0}.
\begin{figure*}
    \centering
    \includegraphics[width=\textwidth]{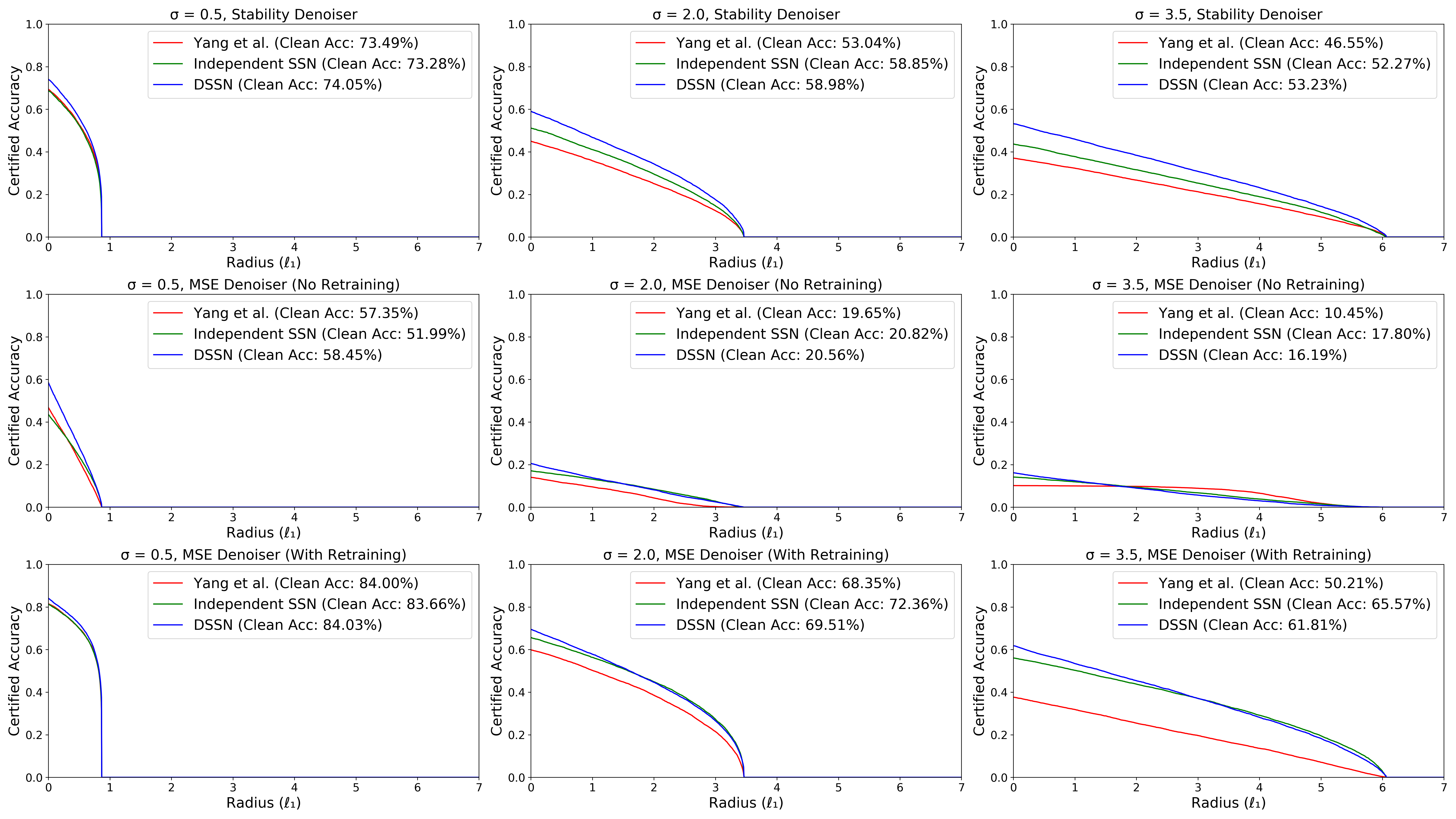}
    \caption{Certified accuracies of models trained with denoisers, for additive uniform noise, SSN with independent noise, and DSSN. See text of Section \ref{sec:denoise} for further details on the denoisers used. For $\sigma \geq 2.0$, Independent SSN outperfroms \cite{pmlr-v119-yang20c}'s method, suggesting that the difference in noise representations can not be resolved by using a denoiser. (It may appear as if \cite{pmlr-v119-yang20c}'s method is more robust at large radii for $\sigma = 3.5$ with an MSE denoiser without retraining: however, this is for a classifier with \textit{clean accuracy} $\approx 10\%$, so this is vacuous: similar results can be achieved by simply always returning the same class.)
    }
    \label{fig:denoiser}
\end{figure*}
\begin{figure*}
    \centering
    \includegraphics[width=\textwidth]{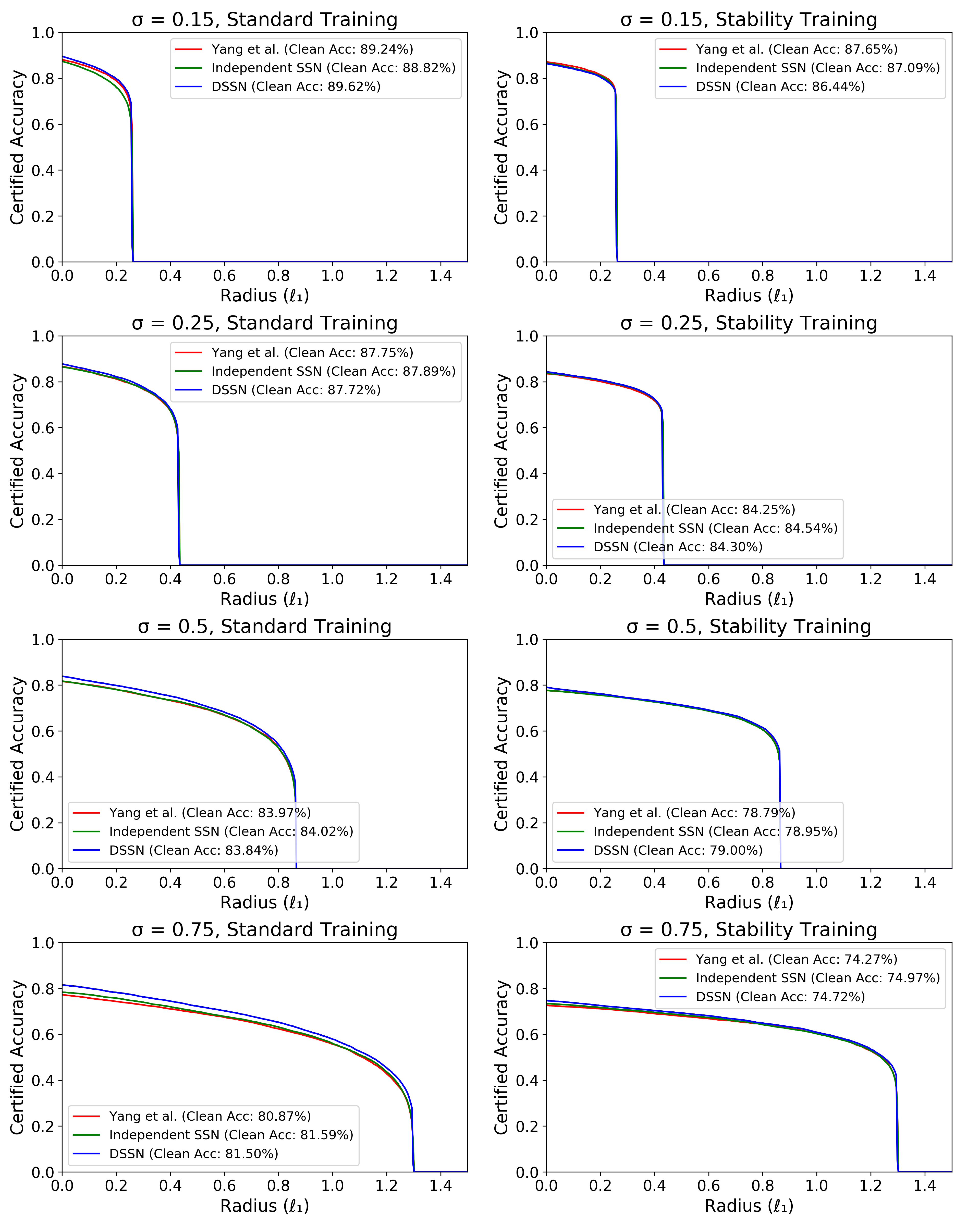}
    \caption{Certification results for CIFAR-10, comparing uniform additive noise, randomized SSN with independent noise, and DSSN, for $\sigma \in \{0.15,0.25,0.5,0.75\}$}
    \label{fig:appendix_cifar_0}.
\end{figure*}
\begin{figure*}
    \centering
    \includegraphics[width=\textwidth]{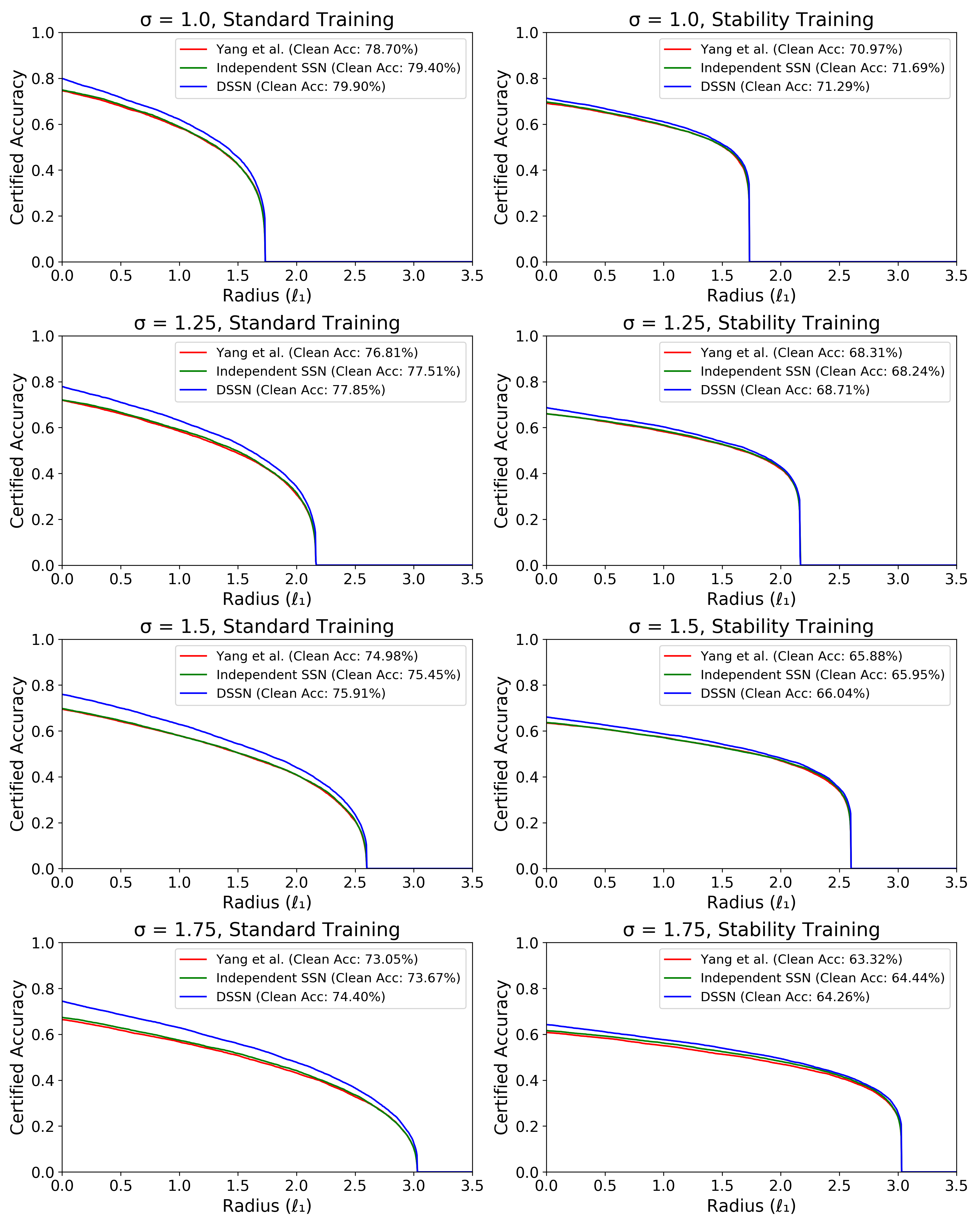}
    \caption{Certification results for CIFAR-10, comparing uniform additive noise, randomized SSN with independent noise, and DSSN, for $\sigma \in \{1.0,1.25,1.5,1.75\}$}
    \label{fig:appendix_cifar_1}.
\end{figure*}
\begin{figure*}
    \centering
    \includegraphics[width=\textwidth]{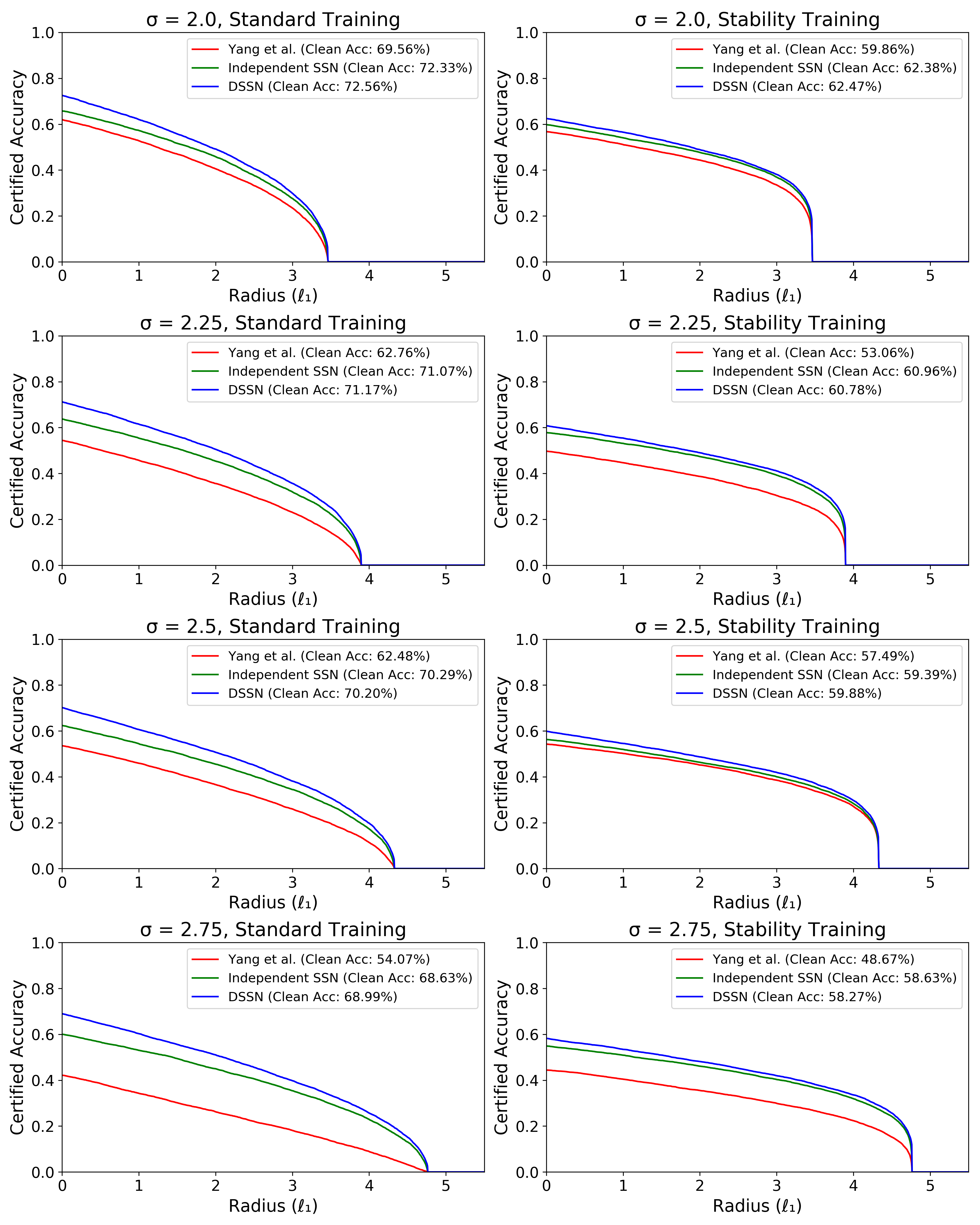}
    \caption{Certification results for CIFAR-10, comparing uniform additive noise, randomized SSN with independent noise, and DSSN, for $\sigma \in \{2.0,2.25,2.5,2.75\}$}
    \label{fig:appendix_cifar_2}.
\end{figure*}
\begin{figure*}
    \centering
    \includegraphics[width=\textwidth]{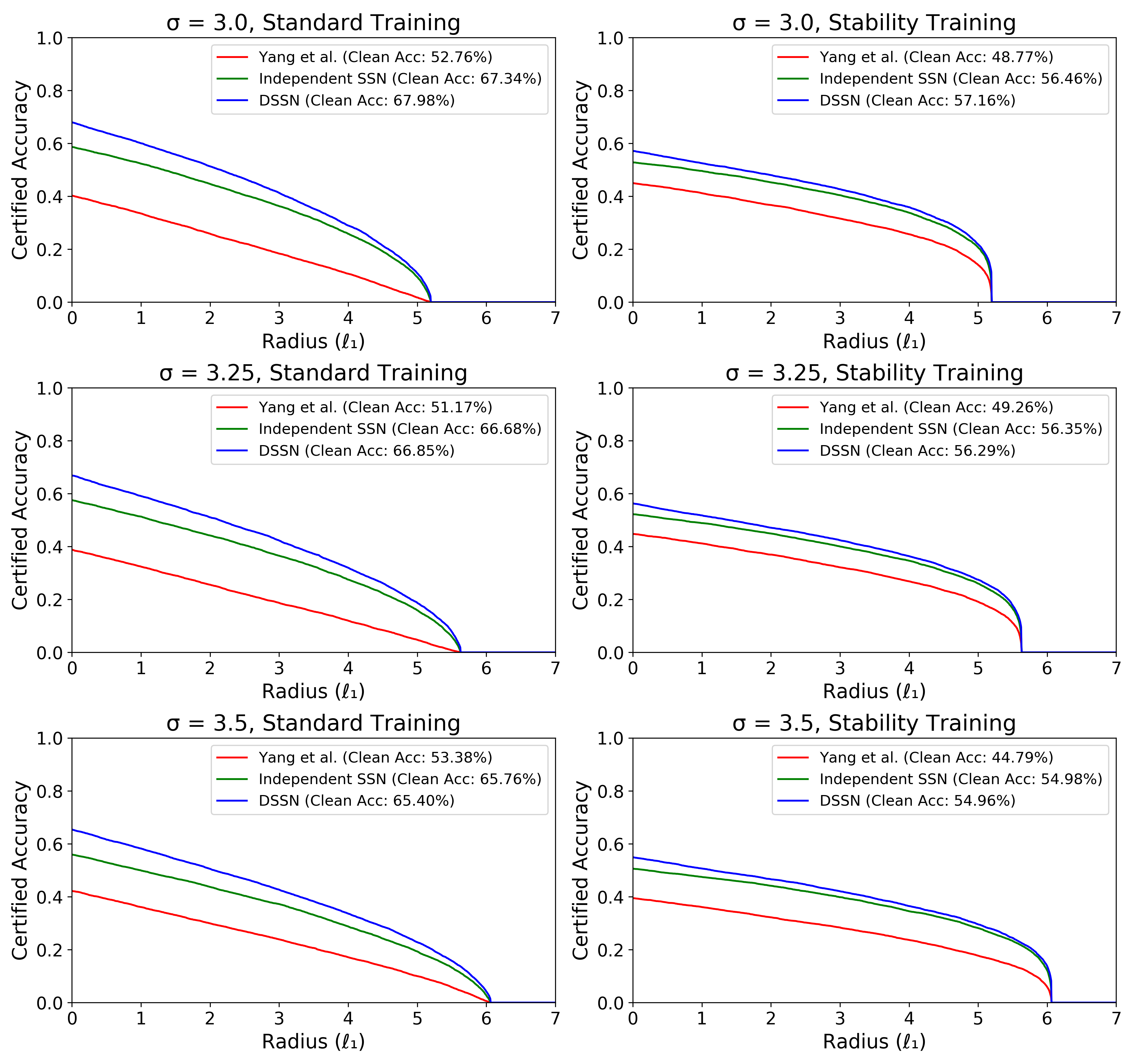}
    \caption{Certification results for CIFAR-10, comparing uniform additive noise, randomized SSN with independent noise, and DSSN, for $\sigma \in \{3.0,3.25,3.5\}$}
    \label{fig:appendix_cifar_3}.
\end{figure*}
\begin{figure*}
    \centering
    \includegraphics[width=\textwidth]{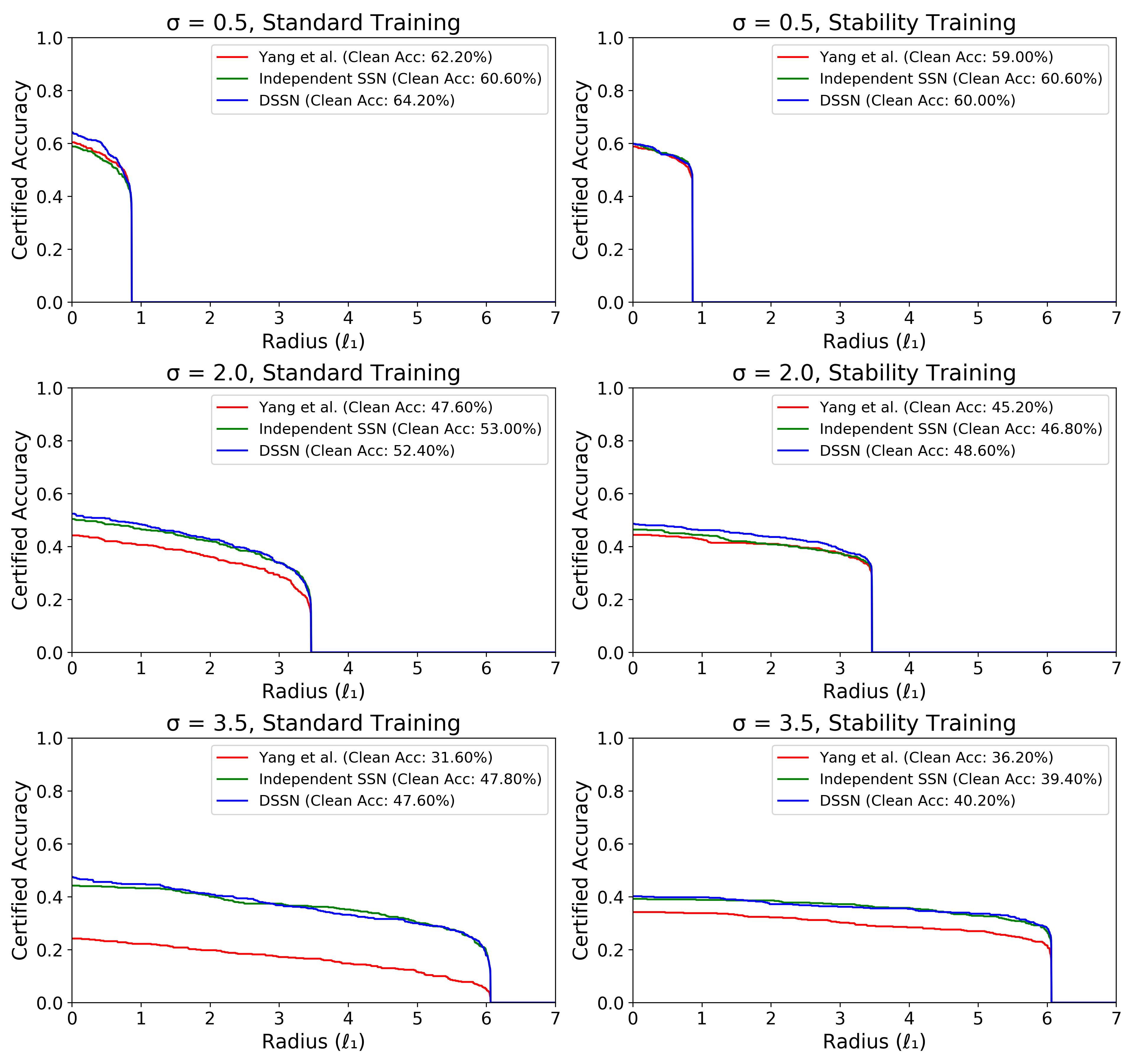}
    \caption{Certification results for ImageNet, comparing uniform additive noise, randomized SSN with independent noise, and DSSN, for $\sigma \in \{0.5,2.0,3.5\}$. Note that we see less improvement in reported certified accuracies due to derandomization (i.e., less difference between Independent SSN and DSSN) in ImageNet compared to in CIFAR-10, particularly at large noise levels.}
    \label{fig:appendix_imagenet_0}.
\end{figure*}

\end{document}